\documentclass[11pt]{article}

\usepackage[final]{acl}

\usepackage{times}
\usepackage{latexsym}

\usepackage[T1]{fontenc}

\usepackage[utf8]{inputenc}

\usepackage{microtype}

\usepackage{inconsolata}

\usepackage{graphicx}
\usepackage{hyperref}
\usepackage{url}
\usepackage{booktabs}       
\usepackage{amsfonts}       
\usepackage{nicefrac}       
\usepackage{microtype}      
\usepackage{xcolor}         
\usepackage{subcaption}     
\usepackage{marvosym}
\usepackage{algorithm}
\usepackage{algpseudocode}
\usepackage{amsmath}
\usepackage{listings}

\usepackage{array}
\usepackage{colortbl} 
\usepackage{titletoc}
\usepackage{multicol}
\usepackage{multirow}
\usepackage{amsmath}
\usepackage{cleveref}
\usepackage{trajan}
\usepackage{arydshln}
\usepackage{enumitem}
\usepackage{wrapfig}
\usepackage{xspace}

\usepackage{makecell}
\usepackage{amsthm}
\usepackage{amsmath}
\usepackage{amssymb}

\usepackage{comment}
\usepackage{algorithm}
\usepackage{algorithmicx}
\usepackage{algpseudocode}
\definecolor{rowblue}{RGB}{238, 242, 252}

\title{Breaking the Impasse: Dual-Scale Evolutionary Policy Training \\ for Social Language Agents}

\author{%
  Minzheng Wang$^{1,2}$, 
  Run Luo$^{2}$,
  Yanbo Wang$^{1,2}$,
  Zichen Liu$^{3}$\\
  {\bf Yuqiao Tan}$^{1,2}$,
  {\bf Tao Tan}$^{4}$,
  {\bf Xu Nan}$^{1}$,
  {\bf Yinhe Zheng}$^{4}$,
  {\bf Wenji Mao}$^{1,2}$\thanks{Corresponding authors.} \\
  $^{1}$ MAIS, Institute of Automation, Chinese Academy of Sciences\\
  $^{2}$ School of Artificial Intelligence, University of Chinese Academy of Sciences\\
  $^{3}$ National University of Singapore 
  $^{4}$ Ritzz-AI\\
\Letter:~\texttt{\{wangminzheng2023, wenji.mao\}@ia.ac.cn}}

\newcommand{\method}{DEPT}
\newcommand{\fullmethod}{Dual-scale Evolutionary Policy Training}

\begin{document}
\maketitle
\begin{abstract} 
    While Reinforcement Learning with Verifiable Rewards (RLVR) has proven effective for closed-ended tasks, extending it to open-ended social language games via self-play reveals a critical issue: \textbf{evolution impasse}. Due to the vast strategy space, language agents frequently converge to homogenized behaviors, leading to deterministic match outcomes that eliminate the gradient signals necessary for policy evolution. To tackle this issue, we propose \textbf{\fullmethod~(\method)} for social language games. \method~introduces a \textit{time-scaled evolutionary perception} mechanism that detects impasse by quantifying dual-scale value baseline divergence alongside match entropy. Upon perceiving the collapse, it then activates \textit{asymmetric advantage reshaping} to dynamically modulate the optimization landscape for intervention. Thus, our method effectively restores gradient signals and enforces sustained strategic exploration. Extensive experiments on multiple social language games demonstrate that \method~outperforms strong baselines, avoiding policy degeneration and driving the continuous evolution of social language agents.
\end{abstract}

\section{Introduction}
Reinforcement Learning with Verifiable Rewards (RLVR) has emerged as a pivotal paradigm for unlocking the reasoning potential of large language models (LLMs), demonstrating remarkable capabilities in static problem-solving tasks such as mathematics and code generation~\citep{guo2025deepseek,team2025kimi,yue2025does,liu2025understanding}. However, these tasks are inherently closed-ended problem-solving, characterized by deterministic correct answers with straightforward question answering. In reality, social decision-making is considerably more complex, often situated in open-ended contexts fraught with conflicting interests and dynamic interactions~\citep{meta2022human,park2023generative,yang2025aria}. Under such dynamic social contexts, LLMs not only engage in coherent logical reasoning but also perform long-horizon planning, infer opponents' intentions in real-time, and process complex social signals such as negotiation~\citep{bianchi2024well,hua2024game}, deception~\citep{hagendorff2023deception,xu2024language}, and bargaining~\citep{xia2024measuring,abdulhai2025lmrl}. Thus, for language agents, a significant gap exists between closed-ended problem-solving capabilities and open-ended social intelligence~\citep{zhou2024sotopia,wang2025adaptive}.

\begin{figure}[!t]
\centerline{\includegraphics[width=\columnwidth]{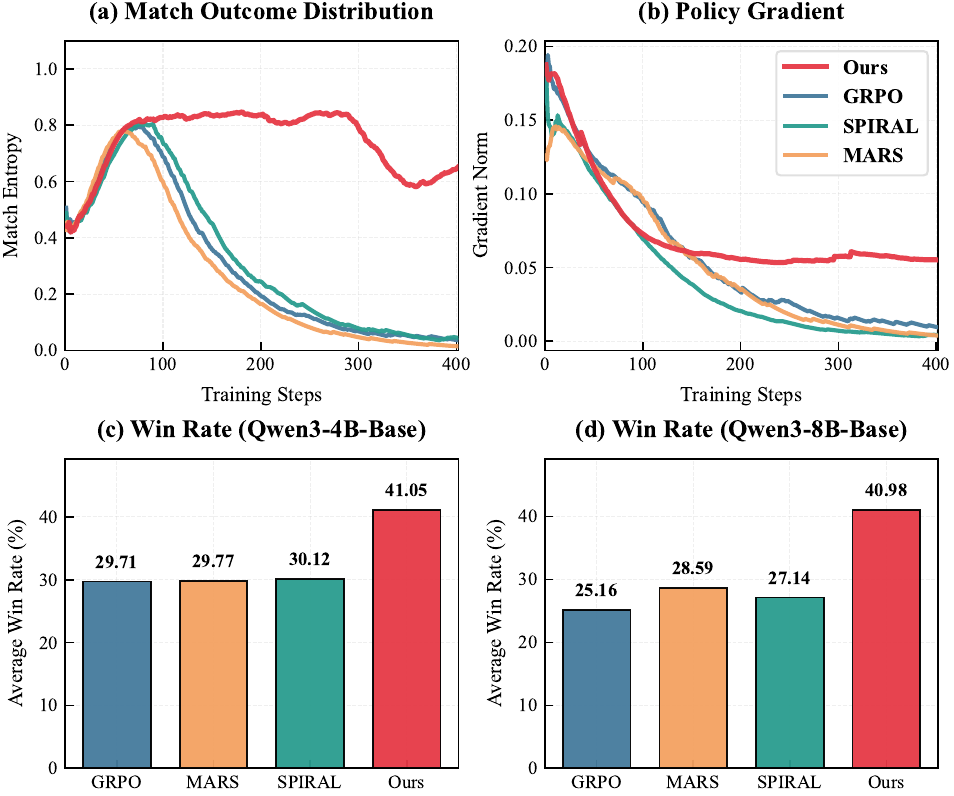}}
\caption{(a) \method~maintains high outcome entropy while baselines collapse into deterministic distributions. 
(b) \method~sustains effective gradient signals while baselines suffer from gradient vanishing. 
(c, d) \method~achieves superior performance, effectively addressing policy degeneration in open-ended social language games.}
\label{fig:intro}
\end{figure}

Social strategic games provide an ideal testbed to practise this kind of social capability in open-ended tasks: they naturally simulate dynamic social contexts and complex decision-making behaviors, while game outcomes (i.e., win/loss/draw) are objectively determined by rules—perfectly aligning with the RLVR paradigm without requiring expensive human annotations~\citep{duan2024gtbench,hua2024game,guertler2025textarena}. More importantly, this property makes self-play an effective pathway for scaling LLM capabilities~\citep{silver2018general,bansal2018emergent,chen2024self,liu2025spiral}. Using a single LLM acting as both players of the game and continuously optimizing through zero-sum competition, self-play can overcome the data scarcity bottleneck in post-training, and meanwhile establish an intrinsic automatic curriculum through opponent co-evolution of social decision-making skills.

Existing self-play RLVR methods~\citep{liu2025spiral,yuan2025mars} have achieved remarkable success in games with limited action space (e.g., Tic-Tac-Toe, Kuhn Poker), by designing role-specific advantage estimation. However, these methods struggle with open-ended social games involving free-form natural language interactions, which constitute \textit{vast action space} that often requires verbal influence on opponents in order to succeed.
Without effective guidance in such unbounded strategy space, agents easily become trapped in suboptimal solutions.
This renders the training process highly liable to \textbf{evolution impasse}, that is, the stagnant situation a cognitive agent encounters in its decision-making~\citep{laird2019soar,sumers2023cognitive}, which manifests as deterministic match outcome distributions for self-play RLVR methods. In this situation, the value baseline converges to the static expected return that causes the advantage function to approach zero, leading to the vanishing of policy gradient and locking the agent in a suboptimal state (as shown in~\Cref{fig:intro}(a)(b)). To break the impasse, it is critical to develop an effective mechanism for evolutionary perception and intervention during self-play. 

To address the above issues, we propose a \textbf{\fullmethod~(\method)} method to dynamically perceive evolutionary states and adaptively modulate the optimization landscape. Specifically, we introduce a \textit{time-scaled evolutionary perception} mechanism that maintains parallel value baselines: a fast baseline sensitive to recent fluctuations and a slow baseline anchored to long-term trends. By measuring the divergence between these baselines alongside match outcome entropy, we derive a quantifiable metric for evolution impasse.
To restore effective gradients, we propose \textit{asymmetric advantage reshaping}. By anchoring on the stable slow baseline and adaptively fusing historical value bounds from the fast baseline, this mechanism constructs a contrastive advantage signal. It effectively penalizes dominant outcomes while amplifying rare trajectories, thereby enforcing exploration. Our approach effectively intervenes in evolution
impasse and restores the gradient landscape, driving continuous strategic evolution. Experimental results across multiple social language games demonstrate that \method~outperforms strong baselines.

Our contributions are summarized as follows: (1) We propose \method~for social language games, which is the first to dynamically perceive evolution impasse and intervene in self-play RLVR. (2) Our method develops the time-scaled evolutionary perception and asymmetric advantage reshaping mechanism, which leverages dual value baselines to quantify evolutionary states and restore effective gradient signals. (3) Extensive experiments demonstrate that \method~outperforms existing SOTA methods, effectively preventing policy degeneration while driving continuous co-evolution.

\section{Preliminary}
\subsection{Task Formulation}
The social language game task is modeled as a two-player zero-sum Markov game~\citep{littman1994markov} $\mathcal{G} = (\mathcal{S}, \mathcal{A}_0, \mathcal{A}_1, \mathcal{T}, \mathcal{R})$. The state space $\mathcal{S}$ represents the interaction context, and $\mathcal{A}_0$, $\mathcal{A}_1$ denote the action spaces for each player, consisting of open-ended natural language utterances. At state $s_t$, the active player $p = t \bmod 2$ receives a partial observation $o_t$ containing the interaction history and its agent-specific private information, then samples an utterance $a_t^{(p)}$ from its policy $\pi_p(\cdot \mid o_t)$. The environment transitions to $s_{t+1}$ via $\mathcal{T}$. An episode forms a trajectory $\tau = \{(s_t, a_t^{(p)})\}_{t=0}^{T}$, concluding at terminal timestep $T$ where each agent receives a reward $R_p(\tau)$ based on its private goals. The zero-sum property requires $R_0(\tau) + R_1(\tau) = 0$, creating direct competition between agents. Typically, rewards are assigned as $+1$ for win, $-1$ for lose, and $0$ for a draw. Each agent aims to learn a policy $\pi_p$ maximizing its expected return: $\mathbb{E}_{\tau \sim \pi_0, \pi_1}[R_p(\tau)]$.

\subsection{Self-Play RLVR}
Self-play trains a single policy LLM $\pi_\theta$ to act as both players, with role conditioning achieved through system prompts specifying whether the model plays as $p_0$ or $p_1$. At each turn, the active player generates $y_t^{(p)}\sim\pi_\theta(\cdot \mid o_t, p)$, where $y_t^{(p)} = \langle\texttt{think}\rangle c_t \langle/\texttt{think}\rangle \langle\texttt{act}\rangle a_t^{(p)} \langle/\texttt{act}\rangle$ follows a reasoning-then-acting paradigm~\citep{yao2023react}, which is strictly enforced via a format reward to guarantee validity, where $c_t$ denotes the reasoning process and $a_t^{(p)}$ is the executable action. This shared-parameter approach creates an automatic curriculum: as the policy improves, it simultaneously faces a stronger opponent.

\paragraph{Role-conditioned Advantage Estimation.}
The vanilla policy gradient suffers from high variance, particularly problematic in self-play where the opponent continuously evolves. Moreover, many games exhibit inherent role asymmetries (e.g., first-mover advantages), causing different expected returns for each role even under optimal play. A global baseline poorly estimates both roles' expectations, leading to biased gradients. To address this, Role-conditioned Advantage Estimation (RAE)~\citep{liu2025spiral,yuan2025mars} maintains separate baselines $b_p$ for each role, estimating the role-specific expected return $\mathbb{E}[R_p(\tau)]$. The advantage is computed as $A_p(\tau) = R_p(\tau) - b_p$, yielding the variance-reduced policy gradient:
\begin{equation}
  \begin{aligned}
      \nabla_\theta J(\theta) = \mathbb{E}_{\tau \sim \pi_\theta} \Big[ \sum_{p \in \{0,1\}} \sum_{t \in T_p} A_p(\tau) \\
      \cdot \nabla_\theta \log \pi_\theta(y_t^{(p)} \mid o_t, p) \Big]
  \end{aligned}
\end{equation}
where $T_p$ denotes turns where player $p$ acted. By centering returns around role-specific expectations, RAE ensures gradient updates reflect genuine strategic improvements rather than inherent positional advantages.

\section{Empirical Investigation of the Impasse}
\label{sec:method_issue}
Self-play RLVR aims to foster continuous improvement through iterative competition. In principle, this paradigm relies on dynamic evolutionary interactions to maintain the diversity of match outcomes, ensuring the variance required for effective policy optimization. However, in open-ended social games, we observe a critical failure mode where agents demonstrate increasing mastery of game mechanics yet stagnate in strategic intelligence. As illustrated in \Cref{fig:method_win}, training on Qwen3-4B-Base yields improvements in basic metrics such as game length and reward. However, the win rate against a fixed Gemini-2 opponent exhibits a declining trend, ultimately converging to a sub-optimal state and failing to demonstrate sustained performance gains. To diagnose the internal dynamics driving this impasse, we introduce \textit{\textbf{Match Entropy}} ($H_{\text{match}}^{(t)}$), which quantifies the diversity of match outcomes at training step $t$ over the distribution $P^{(t)}=\{p_{\text{win}}, p_{\text{draw}}, p_{\text{loss}}\}$:
\begin{equation}
    H_{\text{match}}^{(t)} =- \frac{1}{\log(|P|)}\sum_{o \in P} p_o \log (p_o)
\end{equation}
\begin{figure}[!t]
\centerline{\includegraphics[width=\columnwidth]{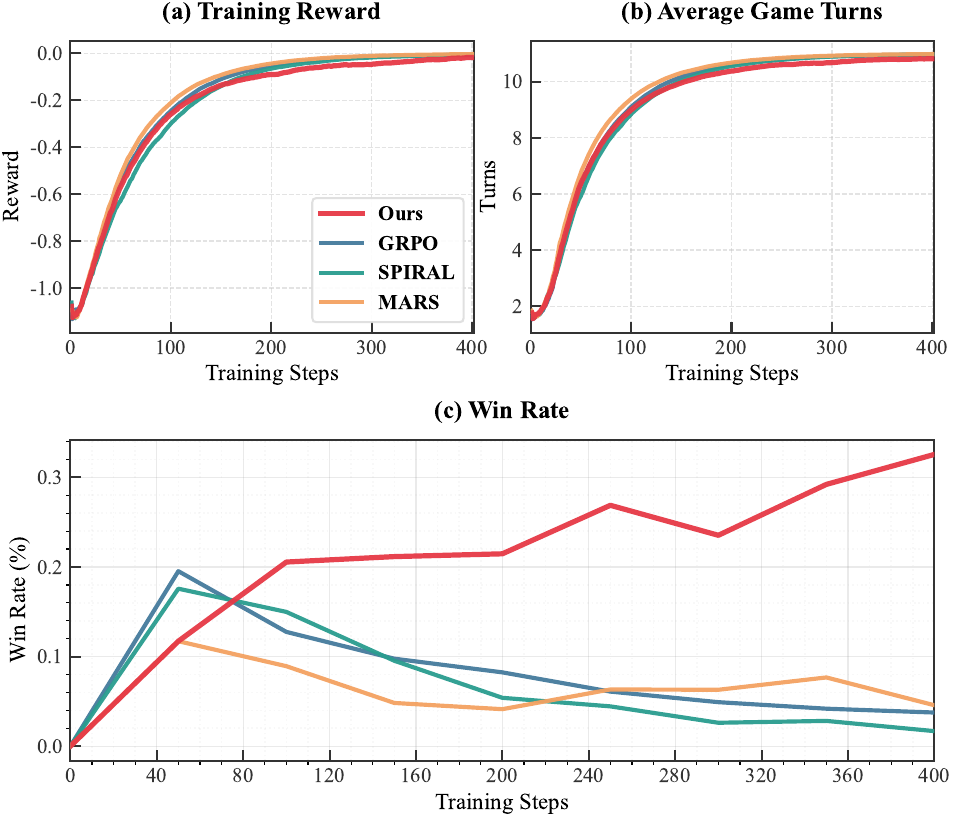}}
\caption{Training dynamics of the self-play RLVR in social language game (Negotiation). While (a) training Reward and (b) average Game Length exhibit normal growth patterns, indicating the acquisition of basic game mechanics, (c) for baselines, the Win Rate against a fixed Gemini-2.0 opponent suffers a significant collapse.}
\label{fig:method_win}
\end{figure}
\Cref{fig:intro}(a) reveals a distinct collapse pattern: after a brief initial exploration phase,$H_{\text{match}}^{(t)}$ rapidly diminishes and remains near zero. This signals convergence to a \textbf{evolution impasse}, a stagnant state where agents become trapped in suboptimal solutions due to the vast, unstructured strategy space of social language games that makes it extremely difficult to sufficiently explore effective strategies (e.g., nuanced persuasion or deception). 

Mathematically, this degeneration directly precipitates the vanishing of gradient signals. In standard policy gradient methods, the update relies on the advantage function $A_p(\tau) = R_p(\tau) - b_p$, where the value baseline $b_p$ estimates the expected return $\mathbb{E}[R_p]$. When the match outcome distribution becomes deterministic (i.e., $H_{\text{match}}^{(t)} \to 0$), the value baseline converges to constant return ($b_p \approx R_t$), causing the advantage to vanish: $A_t \to 0$. This diminishes gradients, preventing escape from local optima. To unlock the reasoning potential of LLMs in social language games, it is paramount to establish a mechanism that can \textit{perceive} this collapse and \textit{intervene} to restore the gradient landscape.

\begin{figure}[!t]
\centerline{\includegraphics[width=\columnwidth]{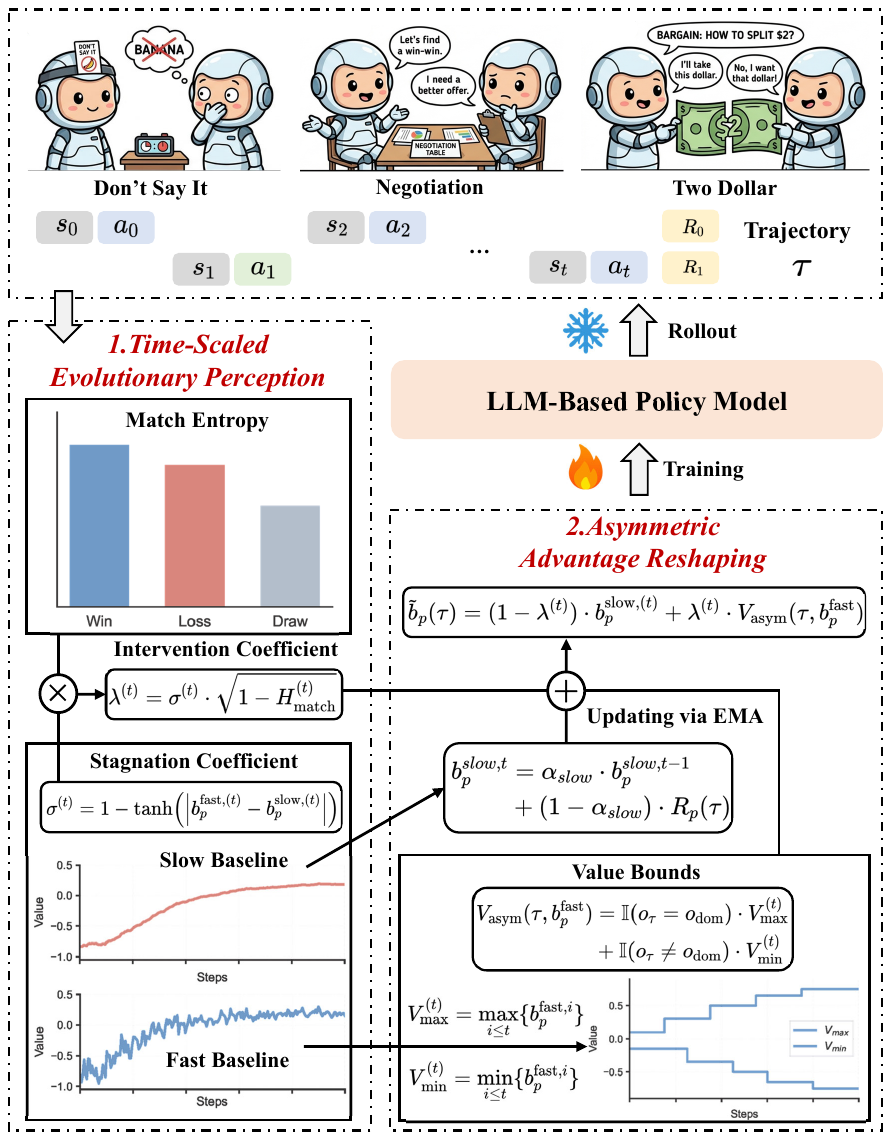}}
\caption{Overview of DEPT. The framework comprises two components: (1) \textbf{\textit{Time-Scaled Evolutionary Perception}}, which maintains fast and slow EMA baselines, combined with Match Entropy to derive the Intervention Coefficient $\lambda^{(t)}$ quantifying evolutionary impasse; and (2) \textbf{\textit{Asymmetric Advantage Reshaping}}, which leverages historical value bounds to suppress dominant outcomes and amplify rare trajectories, restoring gradient signals for continuous policy evolution }
\label{fig:main}
\end{figure}
\section{\fullmethod}
\label{sec:method}
To overcome the evolutionary
impasse, we propose \fullmethod~(\method), an algorithm designed to dynamically perceive evolutionary states and adaptively modulate optimization landscapes to enforce continuous evolution. As shown in \Cref{fig:main}, our algorithm leverages a dual-timescale value baseline to quantify the stationarity of the training process and dynamically injects intrinsic motivation signals into the advantage estimation to enforce continuous evolution.

\subsection{Time-Scaled Evolutionary Perception}
Standard value baselines are inherently reactive: they simply track the current expected return, rendering them incapable of distinguishing between a \textit{stable optimal policy} and a \textit{evolution impasse}. In both cases, the baseline stabilizes, failing to signal that the learning process has stalled. To distinguish these states, we introduce a dual-timescale mechanism that captures the \textit{dynamics} of the training process by contrasting short-term and long-term value estimates. For each player role $p \in \{0, 1\}$, we maintain two parallel value baselines, $b_p^{\text{fast}}$ and $b_p^{\text{slow}}$, updated via Exponential Moving Average (EMA) with distinct decay rates $\alpha_{\text{fast}} < \alpha_{\text{slow}}$:
\begin{equation}
    b_{p}^{k,t} = \alpha_{k} \cdot b_{p}^{k,t-1} + (1 - \alpha_{k}) \cdot R_p(\tau), k \in \{\text{fast}, \text{slow}\}
\end{equation}
Here, $b_p^{\text{fast}}$ rapidly responds to recent outcome returns, while $b_p^{\text{slow}}$ serves as a stable anchor by incorporating a longer history of game outcomes. The divergence between these two signals effectively functions as a differential indicator of training progress: during phases of rapid policy improvement, the fast baseline quickly shifts to match new reward levels, whereas the slow baseline lags behind, creating a significant delta ($|b_p^{\text{fast}} - b_p^{\text{slow}}|$). Conversely, when the policy stabilizes—implying the reward distribution has become stationary—the fast baseline ceases to deviate from the slow baseline, and the two estimates asymptotically converge. We leverage this property to define the Stagnation Coefficient $\sigma^{(t)} \in [0, 1]$ to quantify the stationarity of the training process:
\begin{equation}
    \sigma^{(t)} = 1 - \tanh\left(\left|b_p^{\text{fast},(t)} - b_p^{\text{slow},(t)}\right| \right)
\end{equation}
When the policy evolves actively, the baseline divergence yields low $\sigma^{(t)}$. As the system settles into stagnation, the diminishing divergence drives $\sigma^{(t)}$ higher.

\paragraph{Intervention Coefficient} High stationarity alone does not necessitate intervention, as it characterizes both successful convergence and failure. To specifically isolate degenerate cases, we incorporate the Stagnation Coefficient $\sigma^{(t)}$ with match entropy $H_{\text{match}}^{(t)}$ defined in \Cref{sec:method_issue}. This yields the Intervention Coefficient $\lambda^{(t)} \in [0, 1]$, which determines the necessity of active guidance:
\begin{equation}
\lambda^{(t)}= \sigma^{(t)} \cdot \sqrt{1 - H_{\text{match}}^{(t)}}
\end{equation}

This soft-gating mechanism ensures that intervention strength scales smoothly with the evolutionary state: (1) Healthy Exploration (high $H_{\text{match}}^{(t)}$): Outcome diversity suppresses $\lambda^{(t)}$, minimizing interference during normal learning. (2) Evolution Impasse (low $H_{\text{match}}^{(t)}$, high $\sigma^{(t)}$): The combination of outcome determinism and high stationarity elevates $\lambda^{(t)}$, progressively strengthening the corrective signal.

\begin{table*}[t]
\caption{Main experimental results on three adversarial games. The best results are \textbf{bolded}.}
\small
\centering
\renewcommand{\arraystretch}{1.2}
\resizebox{\linewidth}{!}{
\begin{tabular}{lcccccccccccc}
\toprule
\multirow{2}{*}{Methods}
& \multicolumn{4}{c}{Don't Say It}
& \multicolumn{4}{c}{Negotiation}
& \multicolumn{4}{c}{Two Dollar} \\
\cmidrule(lr){2-5} \cmidrule(lr){6-9} \cmidrule(lr){10-13}
& GPT-5.1 & Gemini-2.5 &Gemini-2.0 & AVG.
& GPT-5.1 & Gemini-2.5 &Gemini-2.0 & AVG.
& GPT-5.1 & Gemini-2.5 &Gemini-2.0 & AVG. \\
\midrule
\multicolumn{13}{l}{\textit{Qwen3-4B-Base}} \\
VANILLA
& 6.25 & 2.47 & 1.43 & 3.39 
& 0.52 & 2.21 & 0.39 & 1.04  
& 1.43 & 0.78 & 2.47 & 1.56 \\
SPAG 
& 37.11 & 26.82 & 14.58 & 26.17 
& 8.98 & 22.95 & 18.35 & 16.76 
& 25.91 & 24.48 & 26.17 & 25.52 \\

GRPO 
& 42.19 & 41.93 & 41.93 & 42.01 
& 15.23 & 22.01 & 20.39 & 19.21 
& 26.56 & 29.43 & 27.73 & 27.91\\

MARS 
& 52.47 & 42.19 & 27.99 & 40.89
& 14.06 & 24.35 & 23.79 & 20.73
& 26.17 & 30.08 & 26.82 & 27.69\\

SPIRAL 
& 48.31 & 42.06 & 47.27 & 45.88  
& 15.10 & 20.57 & 14.83 & 16.84
& 26.30 & 30.08 & 26.56 & 27.65\\

\rowcolor{rowblue}
\method~(Ours) 
& \textbf{56.64} & \textbf{54.95} & \textbf{58.59} & \textbf{56.73} 
& \textbf{17.84} & \textbf{36.46} & \textbf{43.67} & \textbf{32.35}
& \textbf{33.20} & \textbf{36.59} & \textbf{32.42} & \textbf{34.07}\\
\midrule
\multicolumn{13}{l}{\textit{Qwen3-8B-Base}} \\
VANILLA
& 23.24 & 16.41 & 5.86 & 15.17 
& 6.25 & 9.11 & 1.69 & 5.69 
& 1.69 & 1.17 & 3.52 & 2.13\\

SPAG 
& 54.10 & 48.44 & 26.41 & 41.21 
& 9.11 & 19.79 & 11.01 & 13.31 
& 17.58 & 16.93 & 23.44 & 19.31\\

GRPO 
& 33.46 & 40.22 & 30.64 & 34.77  
& 5.47 & 17.97 & 20.68 & 14.70  
& 25.52 & 26.69 & 25.78 & 26.00\\

MARS 
& 51.95 & 41.54 & 28.39 & 40.62  
& 8.46 & 16.80 & 23.04 & 16.10
& 35.16 & 25.13 & 26.82 & 29.04\\

SPIRAL 
& 41.15 & 29.69 & 42.84 & 37.89 
& 8.59 & 22.27 & 21.04 & 17.30
& 25.65 & 29.17 & 23.83 & 26.22\\

\rowcolor{rowblue}
\method~(Ours) 
& \textbf{63.02} & \textbf{52.73} & \textbf{47.92} & \textbf{54.56} 
& \textbf{14.06} & \textbf{32.81} & \textbf{48.76} & \textbf{31.88} 
& \textbf{40.62} & \textbf{38.02} & \textbf{30.86} & \textbf{36.50}\\
\bottomrule
\end{tabular}
}
\label{tab:main_results}
\end{table*}
\subsection{Asymmetric Advantage Reshaping}
\label{sec:reshaping}
When $\lambda^{(t)} \to 1$, the standard advantage signal vanishes $A(\tau) \approx 0$. To restore effective learning, we introduce an Asymmetric Advantage Reshaping, reshaping the advantage function by constructing a contrastive baseline that differentially treats dominant versus rare outcomes. 

\paragraph{Global Performance Bounds} 
Constructing such a contrastive signal requires a rigorous reference scale. Arbitrary baseline modifications can lead to unbounded advantage values, risking gradient instability and distorting the reward scale. To ensure effective reshaping, we track the historical upper and lower bounds of the expected returns. We utilize the fast baseline $b_p^{\text{fast}}$: it is sufficiently sensitive to register performance extrema (potential capabilities) yet robust enough to filter out single-sample noise. We define the historical bounds as:
\begin{equation}
    V_{\max}^{(t)} = \max_{i \le t} \{b_p^{\text{fast},i}\}, \quad V_{\min}^{(t)} = \min_{i \le t} \{b_p^{\text{fast},i}\}
\end{equation}

\paragraph{Asymmetric Value Construction} 
Match entropy collapse stems from one outcome class dominating the batch. To break this homogeneity, we assign different value based on whether a trajectory $\tau$ produces the dominant outcome $o_{\text{dom}}$ or not. We then construct an asymmetric target value $V_{\text{asym}}(\tau, b_p^{\text{fast}})$ that applies differential pressure based on the outcome type. Formally, this asymmetric value is defined as:
\begin{equation}
    \begin{split}
        V_{\text{asym}}(\tau, b_p^{\text{fast}}) &= \mathbb{I}(o_\tau = o_{\text{dom}}) \cdot V_{\max}^{(t)} \\
        &+ \mathbb{I}(o_\tau \neq o_{\text{dom}}) \cdot V_{\min}^{(t)}
    \end{split}
    \end{equation}
This construction induces asymmetric gradient flow through the advantage function $A_\tau = R_\tau - V_{\text{asym}}^{(t)}(\tau)$. For dominant-outcome trajectories, the high $V_{\max}$ reduces positive gradients and amplifies negative gradients, suppressing the over-represented behavior. For rare-outcome trajectories, the low $V_{\min}$ amplifies positive gradients and reduces negative gradients, encouraging exploration of under-represented strategies.

\paragraph{Adaptive Baseline Fusion} Finally, to ensure smooth transitions between normal training and intervention phases, we synthesize the final baseline $\tilde{b}_p(\tau)$ by interpolating between the stable slow baseline and the asymmetric value. The intervention coefficient $\lambda^{(t)}$ governs this fusion:
\begin{equation}
    \tilde{b}_p(\tau) = (1 - \lambda^{(t)}) \cdot b_p^{\text{slow},(t)} + \lambda^{(t)} \cdot V_{\text{asym}}(\tau,b_p^{\text{fast}})
\end{equation}
When $\lambda^{(t)}$ is small, the baseline approximates the standard slow estimate, preserving normal gradient dynamics. As $\lambda^{(t)}$ increases, the asymmetric term gradually dominates, progressively steering the policy away from the collapsed state. The policy is updated using the reshaped advantage $\tilde{A}_p(\tau) = R_p(\tau) - \tilde{b}_p(\tau)$, yielding the final gradient objective:
\begin{equation}
    \begin{aligned}
        \nabla_\theta J(\theta) &= \mathbb{E}_{\tau \sim \pi_\theta} \Big[ \sum_{p \in \{0,1\}} \sum_{t \in T_p} \left( R_p(\tau) - \tilde{b}_p(\tau) \right) \\
       &\quad \cdot \nabla_\theta \log \pi_\theta(y_t^{(p)} \mid o_t, p) \Big]
    \end{aligned}
\end{equation}

The full optimization procedure is shown in \Cref{algorithm:method}. We also provide theoretical analysis about the mathematical foundations of our dual-timescale metrics and the gradient dynamics guaranteeing impasse recovery in \Cref{sec:theoretical_analysis}.

\section{Experiments}
\label{sec:exp}
\subsection{Experimental Settings}
\paragraph{Tasks} We evaluate our method on three adversarial open-ended language tasks involving multi-turn conversation and decision-making: (1) \textbf{Don't Say It}: Players aim to trick their opponent into saying a secret word while avoiding saying their own. The first to speak the word loses; otherwise, it is a draw upon reaching the turn limit. (2) \textbf{Negotiation}: Players negotiate to exchange five resource types based on private valuations, aiming to maximize inventory value. The player with the higher value wins, with ties declared for equal payoffs or reaching turn limit. (3) \textbf{Two Dollar}: Players bargain to split a fixed fund based on hidden role constraints. The player with the higher payoff wins; equal payoffs or reaching the turn limit result in a draw. All three games are adopted from TextArena~\citep{guertler2025textarena}. For further details, please refer to \Cref{app:tasks}.

\paragraph{Baselines} We evaluate our method against several self-play RLVR methods in zero-sum games: (1) \textbf{SPIRAL}~\citep{liu2025spiral}: The first fully online multi-turn self-play RL system for LLMs. It introduces Role-conditioned Advantage Estimation to prevent thinking collapse in zero-sum games; (2) \textbf{MARS}~\citep{yuan2025mars}: It modifies the multi-turn GRPO algorithm by introducing a turn-level advantage estimator and role-specific advantage normalization to stabilize self-play RL training; (3) \textbf{GRPO}: An adaptation of multi-turn GRPO~\citep{shao2024deepseekmath} for self-play settings, where group advantages are calculated separately for each role to preserve effective role-specific learning signals; (4) \textbf{SPAG}~\citep{cheng2024self}: An offline RL method that designs a discounted reward scheme for adversarial language games. Implementation details for all baselines are provided in \Cref{app:baselines}.
\begin{figure}[!t]
    \centerline{\includegraphics[width=\columnwidth]{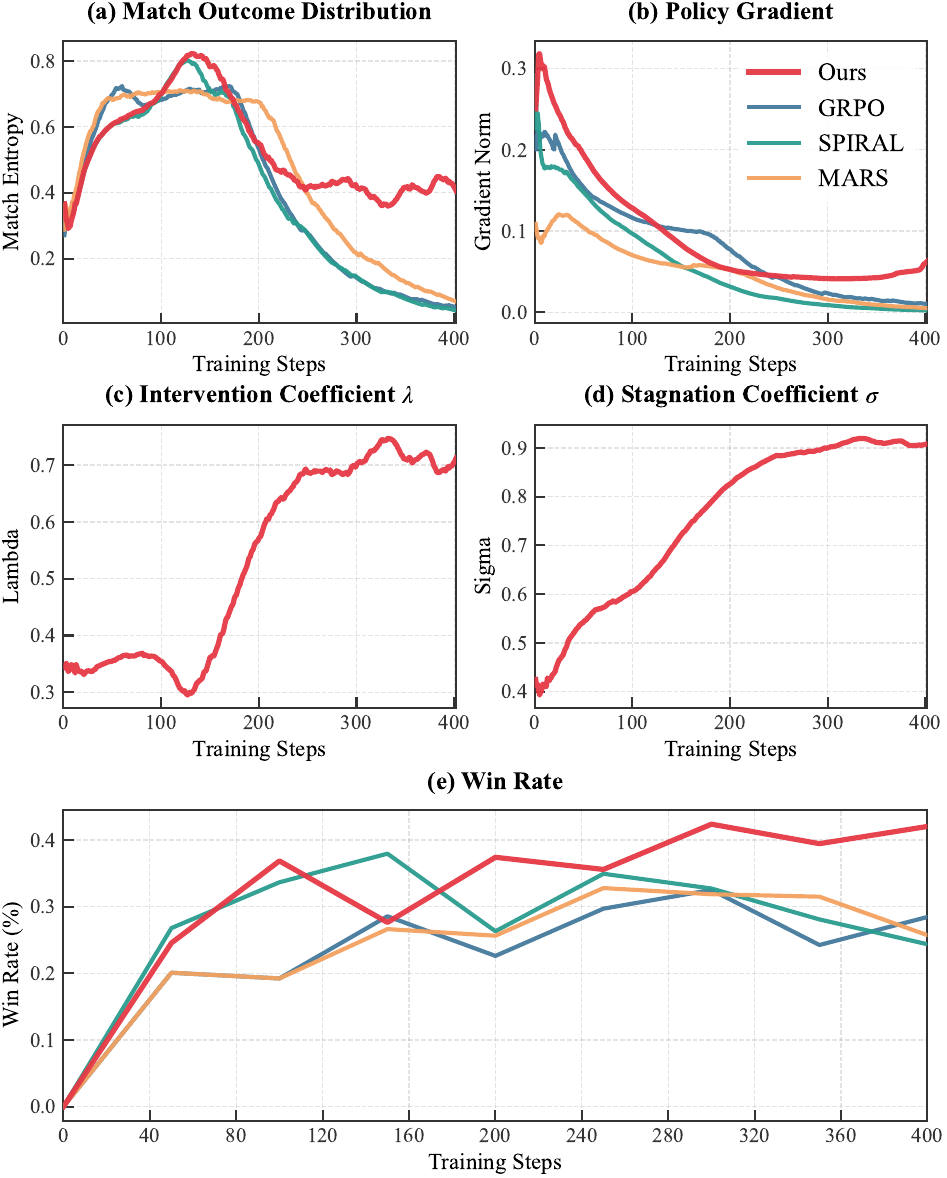}}
    \caption{Training dynamics on Two Dollars. For Don't Say it and Negotiation, please refer to \Cref{fig:exp_dontsayit,fig:exp_negotiation}.}
    \label{fig:exp_twodollar}
    \end{figure}
\paragraph{Evaluation} Following SPIRAL~\citep{liu2025spiral}, we select Gemini-2 (gemini-2.0-flash-lite) as one of our evaluation opponents. To provide a more comprehensive assessment, we additionally evaluate our method against different LLMs: GPT-5.1 opponent (gpt-5.1-2025-11-13) and Gemini-2.5 opponent (gemini-2.5-flash-lite). Performance is quantified by the win rate against these opponents, defined as $\frac{1}{N} \sum_{i=1}^{N} \mathbb{I}[R_i = 1]$, where $R_i$ denotes the final reward for the $i$-th trajectory (with $R_i=1$ indicating a win). For each task, we conduct experiments using three random seeds (42, 100, 200), evaluating $N = 256$ episodes per seed. The evaluation is symmetric, granting an equal opportunities to act as the first and second player.

\paragraph{Training} We employ Qwen3-4B-Base and Qwen3-8B-Base~\citep{yang2025qwen3} as the backbone models for all experiments. Following SPIRAL~\citep{liu2025spiral}, we set the training steps to 400 with 128 samples collected per step, resulting in a total of 51,200 game transitions for each task. The training batch size is set to 128. For the dual-baseline mechanism, we use $\alpha_{\text{fast}}=0.5$ and $\alpha_{\text{slow}}=0.95$. Detailed RL training hyperparameters are provided in \Cref{app:hyperparameters}.

\subsection{Experimental Results}
\paragraph{RQ1: Is \method~effective for social language agents?}
As shown in \Cref{tab:main_results}, \method~achieves the highest average win rate across all three games, consistently outperforming all baselines on both backbones. On Qwen3-4B-Base, \method~attains 32.35\% in Negotiation, surpassing the strongest baseline (MARS: 20.73\%) by a relative margin of 56\%. Similar gains are observed in Don't Say It (+10.8\% over SPIRAL) and Two Dollar (+6.16\% over GRPO). These improvements scale consistently to Qwen3-8B-Base, where \method~nearly doubles the Negotiation win rate compared to SPIRAL (31.88\% vs. 17.30\%). The limited performance of competitive baselines aligns with the evolution impasse described in \Cref{sec:method_issue}. In contrast, \method's consistent superiority against diverse opponents demonstrates that our dual-timescale perception and asymmetric reshaping mechanism effectively prevents strategic collapse, enabling robust social reasoning.

\paragraph{RQ2: Can \method~effectively perceive and intervene during training?}
As shown in \Cref{fig:exp_twodollar,fig:exp_dontsayit,fig:exp_negotiation}, our method effectively mitigates match entropy collapse and preserves informative gradient signals, enabling the discovery of better strategies as reflected by steadily increasing win rates. In contrast, the baseline converges to evolutionary
impasse with suboptimal policies. Furthermore, the proposed Intervention Coefficient accurately detects stagnation: during the initial exploration phase, the intervention remains low; as training progresses toward convergence, the Stagnation Coefficient increases, becoming more responsive to changes in match entropy and adaptively adjusting the intervention strength to maintain sufficient match entropy.
\begin{figure}[!t]
    \centerline{\includegraphics[width=\columnwidth]{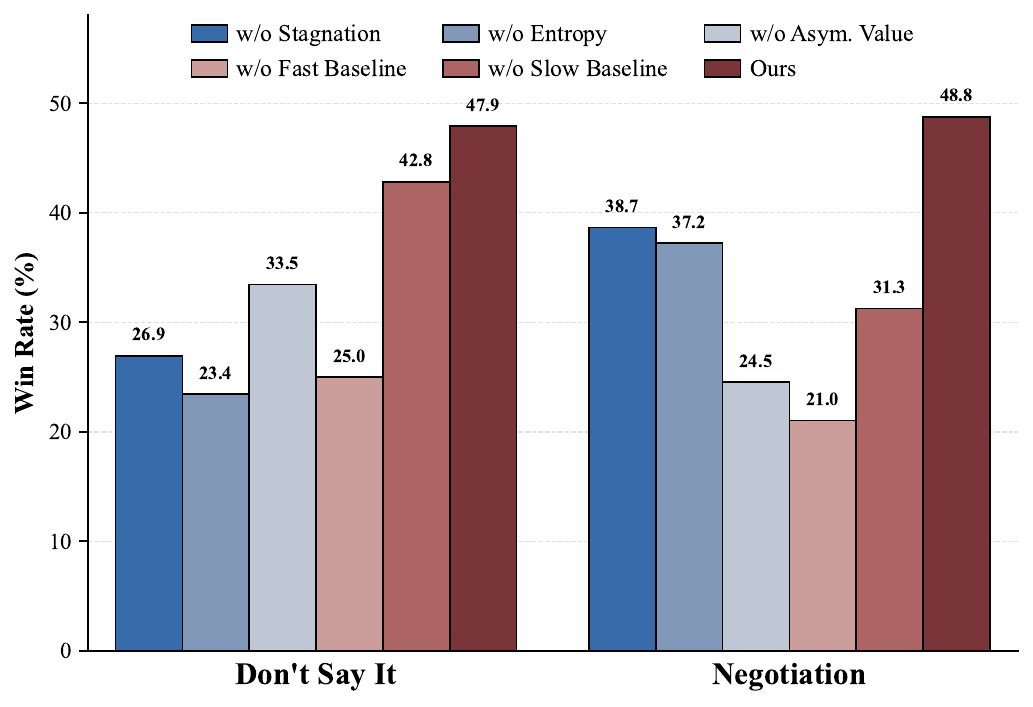}}
    \caption{Ablation study for \method.}
    \label{fig:abla_win}
\end{figure}

\subsection{Analysis}
\paragraph{RQ3: How does \method~work?} To examine the effectiveness of each component in \method, we conduct ablation studies using Qwen3-8B-Base, as reported in \Cref{fig:abla_win}. All ablations lead to performance degradation, confirming the necessity of each design choice. \textit{\textbf{(1) Effect of perception.}} Removing the Stagnation Coefficient causes aggressive entropy correction during non-stationary training phases, leading to substantial advantage estimation errors that disrupt policy exploration. Removing match entropy gating results in continuous penalties even when the outcome distribution maintains sufficient diversity, forcing ineffective random exploration and preventing convergence. \textit{\textbf{(2) Effect of asymmetric value.}} Without asymmetric target values, the method cannot selectively suppress over-represented behaviors while supporting under-represented ones, leading to convergence at suboptimal low-entropy states. \textit{\textbf{(3) Effect of dual-baseline.}} Using either the fast or slow baseline alone degrades performance. This confirms that the effectiveness of \method~arises from the synergy between dual baselines, which jointly enable accurate perception and targeted intervention.
\begin{table}[t]
    \caption{Out-of-distribution evaluation on increased complexity games.}
    \small
    \centering
    \renewcommand{\arraystretch}{1.2}
    \resizebox{\linewidth}{!}{
    \begin{tabular}{lccc}
    \toprule
    Methods  
    & \makecell{HardCore\\Don'tSayIt}
    & \makecell{RandomValue\\Negotiation}
    & \makecell{VariableSum\\Dollar} \\
    \midrule
    \multicolumn{4}{l}{\textit{Qwen3-4B-Base}} \\
    VANILLA
    & 0.39 & 0.13 & 3.78 \\
    SPAG 
    & 7.03 & 11.85 & 26.17\\
    
    GRPO 
    & 22.01 & 14.52 & 27.99\\
    
    MARS 
    & 12.11 & 12.55 & 25.39\\
    
    SPIRAL 
    & 17.97 & 12.56 & 23.96\\
    \rowcolor{rowblue}
    \method~(Ours) 
    & \textbf{22.92} & \textbf{15.36} & \textbf{30.47}\\
    \midrule
    \multicolumn{4}{l}{\textit{Qwen3-8B-Base}} \\
    VANILLA
    & 2.34 & 3.78 & 2.73 \\
    
    SPAG 
    & 12.12 & 13.02 & 26.56\\
    
    GRPO 
    & 10.03 & 6.64 & 19.14\\
    
    MARS 
    & 8.33 & 7.56 & 24.48\\
    
    SPIRAL 
    & 18.88 & 6.52 & 23.18\\
    
    \rowcolor{rowblue}
    \method~(Ours) 
    & \textbf{19.27} & \textbf{17.08} & \textbf{27.73}\\
    \bottomrule
    \end{tabular}
    }
    \label{tab:ood_game_results}
\end{table}
\begin{table*}[t]
    \caption{Out-of-distribution on reasoning benchmarks. The best results are \textbf{bolded}. The second results are \underline{underline}.}
    \small
    \centering
    \renewcommand{\arraystretch}{1.2}
    \resizebox{\linewidth}{!}{
    \begin{tabular}{lccccccccc}
    \toprule
    Methods
    & Minerva & AMC-23 & AIME24 & AIME25 & Math500 & Olympiad
    & GPQA-D & MMLU-Pro & Average \\
    \midrule
    \multicolumn{10}{l}{\textit{Qwen3-4B-Base}} \\
    VANILLA
    &22.36 &42.03 &9.58 &6.88 &65.80 &34.52 
    &28.79 &39.36
    &31.07 \\
    SPAG 
    &24.70 & 44.27 & \underline{10.69} &7.78 &\underline{70.87}  &\underline{36.15}  
    &33.33 &52.71    
    &35.06 \\
    GRPO
    &23.25 &38.65 &7.43  &5.80  &56.26 
    &26.64 &\underline{35.86} &49.14 
    &30.38 \\
    MARS
    &\underline{29.57} &45.47 &9.41  &\underline{8.47}  &69.25 
    &35.18 &34.01 &\underline{53.34} 
    &\underline{35.59} \\
    SPIRAL
    &27.17 &\underline{45.73} &9.55  &7.81  &67.95 
    &34.15 &34.18 &51.32 
    &34.73 \\
    \rowcolor{rowblue}
    \method~(Ours)
    &\textbf{32.57}&\textbf{48.70}&\textbf{11.22}&\textbf{10.03}&\textbf{74.64}
    &\textbf{38.79}&\textbf{37.04}&\textbf{56.45}
    &\textbf{38.68} \\
    \midrule
    \multicolumn{10}{l}{\textit{Qwen3-8B-Base}} \\
    VANILLA
    &24.15 &47.66 &13.44 &10.00 &67.54 
    &35.80 &29.80 &46.97 
    &34.42 \\
    SPAG 
    &26.28 &50.42 &12.78 &10.76 &69.67 
    &37.21 &31.14 &51.46 
    &36.22 \\
    GRPO
    &\underline{34.24} &\underline{53.75} &12.50 &\underline{12.05} &\textbf{75.94}
    &\underline{40.56} &35.69 &\textbf{59.05}
    &\underline{40.47} \\
    MARS
    &33.75 &53.59 &\textbf{14.03}&11.74 &74.63 
    &40.35 &\underline{37.37} &57.80 
    &40.41 \\
    SPIRAL
    &31.71 &53.59 &\underline{13.54} &10.38 &72.81 
    &38.40 &35.69 &\underline{58.28} 
    &39.30 \\
    \rowcolor{rowblue}
    \method~(Ours)
    &\textbf{35.00}&\textbf{57.03}&13.06 &\textbf{12.43}&\underline{74.98} 
    &\textbf{40.83}&\textbf{38.72}&57.60 
    &\textbf{41.21} \\
    \bottomrule
    \end{tabular}
    }
    \label{tab:ood_reasoning_results}
\end{table*}

\paragraph{RQ4: How does \method~perform on out-of-distribution evaluation?} To further assess generalization, we evaluate \method~on out-of-distribution games, QA, and mathematical reasoning tasks to examine whether it effectively enhances strategic capabilities. For out-of-distribution games, we evaluate on three more complex, held-out variants—HardCore Don'tSayIt, RandomValue Negotiation, and VariableSum Dollar—to test generalization from their simpler counterparts (e.g., from Negotiation to RandomValue Negotiation). We still report the average win rate across 3 seeds (42, 100, 200), 256 episodes per seed. For QA and mathematical reasoning, we adopt eight benchmarks: MATH500, OlympiadBench, Minerva Math, AIME24, AIME25, and AMC23 for mathematics; GPQA-Diamond and MMLU-Pro for QA. Evaluation details are deferred to the \Cref{app:ood_game,app:ood_math_qa}.

As shown in \Cref{tab:ood_game_results,tab:ood_reasoning_results}, \method~demonstrates superior out-of-distribution generalization. In out-of-distribution games, our approach maintains robust performance even in more complex environments. For QA and mathematics, self-play on games enables effective reasoning ability transfer to general reasoning tasks. This aligns with prior findings~\citep{cheng2024self,liu2025spiral,yuan2025mars} that game-based self-play effectively elicits reasoning capabilities—since \method~achieves stronger in-game performance with more strategic behaviors, it consequently yields greater improvements on reasoning benchmarks. The improvements are particularly pronounced on Qwen3-4B-Base, where consistent gains are observed across all benchmarks.

\paragraph{RQ5: Additional results.} We provide several supplementary analyses in the appendix to further validate the effectiveness of \method. First, we examine the semantic diversity of generated actions to verify whether \method~can produce more varied and contextually appropriate behaviors rather than repetitive patterns. The results in \Cref{fig:diverse} demonstrate that \method~achieves higher semantic richness compared to baselines, indicating that our dual-process mechanism encourages more creative and diverse action generation (See \Cref{app:diverse}). Second, we conduct a sensitivity analysis on the key hyperparameter $\alpha_{\text{fast}}$, which controls the balance between fast and slow thinking modes. Our analysis in \Cref{tab:exp_hyper} reveals that \method~maintains stable performance across the range [0.4, 0.6], suggesting that our method is robust to hyperparameter choices and does not require extensive tuning for deployment (See \Cref{app:hyper}). Third, we analyze the computational overhead introduced by our dual-process architecture to assess its practical applicability. The results in \Cref{fig:buget} show that \method~introduces negligible additional cost (less than 0.0016\% of the total per-iteration training time)  while achieving substantial performance improvements, making it suitable for real-world applications where efficiency is critical (See \Cref{app:computation}). Additionally, we perform comprehensive statistical significance tests to rigorously validate our experimental findings. The results in \Cref{tab:static} confirm that \method~provides statistically significant gains over all baselines (p < 0.05), demonstrating that the observed improvements are not due to random variance (See \Cref{app:static}).

\section{Related Work}
\noindent \textbf{LLM-based Reinforcement Learning} Reinforcement learning for LLMs has evolved from Reinforcement Learning from Human Feedback (RLHF) to RLVR~\citep{NEURIPS2022_b1efde53,guo2025deepseek,team2025kimi,yue2025does,liu2025understanding,tan2025bottom,nie2026attnpo,tan2026pyxpy}, leveraging objective ground truth to unlock deep potential in mathematical reasoning and code generation. However, existing RLVR research focuses on closed-ended tasks with well-defined answers. Real-world open-ended social interactions---such as negotiation and deception---are fundamentally different~\citep{meta2022human,park2023generative,zhou2024sotopia,xu2024language,wang-etal-2025-demo,zhang2026expseek}: they involve infinite action spaces, lack unique correct strategies, and require real-time reasoning about dynamic opponent intentions~\citep{abdulhai2025lmrl,yang2025aria,wang2025adaptive}. Our work addresses the challenge of applying RLVR to open-ended social interaction while sustaining continuous evolution in dynamic environments.

\noindent \textbf{LLM Self-play Training on Language Game}
As a classical paradigm, Self-play~\citep{silver2018general,bansal2018emergent,chen2024self,cheng2024self} addresses data scarcity while constructing an automatic curriculum through opponent co-evolution. For language games in LLMs, SPIRAL~\citep{liu2025spiral} and MARS~\citep{yuan2025mars} introduce RAE to handle role asymmetry, achieving strong performance in limited action spaces. However, in more complex open-ended language domains, these methods suffer from evolutionary
impasse: LLMs converge to homogeneous behaviors, collapsing outcome distributions into deterministic results. This drives advantages toward zero, causing gradient vanishing and trapping models in suboptimal states. Our \method~addresses this through dual-baseline perception to detect stagnation, combined with asymmetric advantage reshaping to restore optimization landscape and ensure continuous policy evolution.

\section{Conclusions}
In this paper, we address the critical challenge of evolution impasse in applying self-play RLVR to open-ended social language games, where agent training is susceptible to gradient vanishing and strategic stagnation. We propose \method, which dynamically perceives evolutionary states and adaptively adjusts the optimization landscape via dual-timescale value baselines. By quantifying stagnation through baseline divergence and match entropy, \method~selectively intervenes to penalize dominant outcomes while amplifying rare trajectories, effectively restoring gradient signals and enforcing continuous strategic exploration. Extensive empirical results confirm that our approach not only surpasses SOTA methods but also maintains superior out-of-distribution generalization capabilities.

\section*{Limitations}

While \method~demonstrates substantial improvements in preventing evolutionary
impasse during self-play training, several limitations offer avenues for future research. 
(1) Self-play reinforcement learning inherently demands computational resources. Our experiments required approximately 30 GPU-hours per run on 8 NVIDIA A800 GPUs. These resource requirements may limit accessibility for researchers with constrained computational budgets.
(2) Following the reasoning-then-acting paradigm, our method generates extended response sequences containing both reasoning traces ($c_t$) and executable actions ($a_t$). This increases inference latency compared to direct action generation, a common trade-off in test-time scaling approaches. The computational cost becomes more pronounced when scaling to longer interaction horizons or larger LLM backbones. 
We believe our dual-scale evolutionary policy training method provides a principled foundation that could inspire novel mechanisms for detecting and intervening in more complex evolutionary dynamics, and we hope this work motivates the community to explore these promising directions.

\section*{Acknowledgments}
This work is supported in part by the National Natural Science Foundation of China under Grants \#72293575, \#72225011 and \#72434005. We also thank Longze Chen and Jiaming Li for their valuable suggestions on the experimental design of this paper.

\bibliography{custom}

\appendix
\newtheorem{definition}{Definition}[section]
\newtheorem{theorem}{Theorem}[section]
\newtheorem{lemma}{Lemma}[section]
\newtheorem{proposition}{Proposition}[section]

\section{Theoretical Analysis}
\label{sec:theoretical_analysis}

In this section, we provide a theoretical analysis of \method. We first establish the mathematical foundation of the Time-scaled Evolutionary Perception mechanism, proving that the proposed metrics serve as estimators for the velocity and variance of the reward distribution. Subsequently, we analyze the gradient dynamics, proving that our method ensures effectively restoring gradient signals in evolutionary
impasse via a synthetic variance injection mechanism.

\subsection{Mathematical Foundation of Evolutionary Perception}
\label{sec:math_perception}

The core of \method~lies in quantifying the evolutionary state using baseline divergence and match entropy. Here, we provide the formal derivations linking these metrics to the underlying reward dynamics.

\subsubsection{Baseline Divergence as Velocity Estimation}
In the main text, we utilize the divergence between fast and slow baselines to measure training stationarity. We now prove that this metric is mathematically equivalent to an estimator of the \textit{velocity of expected return}.

\begin{proposition}[Divergence-Velocity Equivalence]
Let $\mu(t) = \mathbb{E}[R_p(\tau)]$ be the underlying expected return at step $t$. Assuming $\mu(t)$ is locally linear within the effective window of the Exponential Moving Average (EMA), the expected divergence between two baselines with decay rates $\alpha_{\text{slow}} > \alpha_{\text{fast}}$ is proportional to the magnitude of the time derivative of the expected return:
\begin{equation}
    \mathbb{E}\left[ \left| b_p^{\text{fast}, t} - b_p^{\text{slow}, t} \right| \right] \approx C \cdot \left| \frac{d\mu(t)}{dt} \right|,
\end{equation}
where $C > 0$ is a constant determined by the time constants.
\end{proposition}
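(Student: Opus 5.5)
Unroll the EMA recursion and compute the expected lag of each baseline behind the current mean $\mu(t)$. For a decay rate $\alpha\in(0,1)$ and initialization far enough in the past that the transient $\alpha^{t}b^{0}$ is negligible, the recursion $b^{t}=\alpha b^{t-1}+(1-\alpha)R^{t}$ unrolls to
\begin{equation*}
b^{t}=(1-\alpha)\sum_{j\ge 0}\alpha^{j}R^{t-j}.
\end{equation*}
Writing $R^{t-j}=\mu(t-j)+\epsilon_{t-j}$ with zero-mean noise $\epsilon$, linearity of expectation gives $\mathbb{E}[b^{t}]=(1-\alpha)\sum_{j}\alpha^{j}\mu(t-j)$.

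Now apply the local-linearity assumption, $\mu(t-j)\approx\mu(t)-j\,\mu'(t)$ on the effective window $j\lesssim 1/(1-\alpha)$. Together with the geometric identities $(1-\alpha)\sum_j\alpha^j=1$ and $(1-\alpha)\sum_j j\alpha^j=\alpha/(1-\alpha)$, this yields
\begin{equation*}
\mathbb{E}[b^{k,t}]\approx\mu(t)-\frac{\alpha_k}{1-\alpha_k}\,\mu'(t),\qquad k\in\{\text{fast},\text{slow}\}.
\end{equation*}
Subtracting the two lags, the common $\mu(t)$ cancels:
\begin{equation*}
\mathbb{E}[b^{\text{fast},t}-b^{\text{slow},t}]\approx C\,\mu'(t),\qquad C=\frac{\alpha_{\text{slow}}}{1-\alpha_{\text{slow}}}-\frac{\alpha_{\text{fast}}}{1-\alpha_{\text{fast}}}.
\end{equation*}
The map $\alpha\mapsto\alpha/(1-\alpha)$ is strictly increasing, and $\alpha_{\text{slow}}>\alpha_{\text{fast}}$, so $C>0$. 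With the paper's values $\alpha_{\text{fast}}=0.5$ and $\alpha_{\text{slow}}=0.95$, this gives $C=19-1=18$.

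The hard step is moving from the expectation of the signed divergence to the expected absolute divergence in the statement. In general $\mathbb{E}|X|\ge|\mathbb{E}X|$, and the noise term adds a contribution, so the two are not equal. The difference $b^{\text{fast}}-b^{\text{slow}}=\sum_j w_j\epsilon_{t-j}+(\text{drift})$ has noise weights $w_j=(1-\alpha_f)\alpha_f^j-(1-\alpha_s)\alpha_s^j$. Its variance is $\sigma_R^2\sum_j w_j^2$, which is $O(1)$ and does not shrink with the per-step batch averaging. Here $\sigma_R^2$ is the per-step variance of the batch-averaged return. I would handle this in two ways.

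\begin{enumerate}
\item[(i)] Read the ``$\approx$'' as holding in the regime where the drift dominates the noise. This requires $|C\mu'(t)|\gg\sigma_R\big(\sum_j w_j^2\big)^{1/2}$, and then $\mathbb{E}|X|=|\mathbb{E}X|\,(1+o(1))$ by a standard Gaussian or Chebyshev argument.
\item[(ii)] Note that each step's baseline update uses a batch mean over $128$ samples. The noise variance is therefore scaled by $1/128$, so the signed approximation transfers to the absolute value up to an additive $O(\sigma_R)$ term.
\end{enumerate}

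I would state this additive correction explicitly, since it is what turns the claimed equivalence into an approximate proportionality. The final step is to check that the truncation error in the Taylor expansion is $O\big(\mu''\cdot\alpha/(1-\alpha)^2\big)$, using $\sum_j j^2\alpha^j$, and hence negligible under the local-linearity assumption.
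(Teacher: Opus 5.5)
Your core derivation is the paper's own: unroll the EMA, Taylor-expand $\mu(t-j)\approx\mu(t)-j\dot{\mu}(t)$, use $(1-\alpha_k)\sum_j j\alpha_k^j=\alpha_k/(1-\alpha_k)$ to get the lags $\mathcal{T}_k$, and subtract to obtain $C=\mathcal{T}_{\text{slow}}-\mathcal{T}_{\text{fast}}>0$. Your extra treatment of $\mathbb{E}|X|$ versus $|\mathbb{E}X|$ (with $X=b^{\text{fast},t}-b^{\text{slow},t}$) fills a real hole, since the paper just writes $\mathbb{E}[\delta^{(t)}]=|\mathbb{E}[b^{\text{fast}}]-\mathbb{E}[b^{\text{slow}}]|$; it is cleanest via $0\le\mathbb{E}|X|-|\mathbb{E}X|\le\mathbb{E}|X-\mathbb{E}X|\le\sqrt{\mathrm{Var}\,X}$, but drop the justification in your item (ii), because the paper's Algorithm 1 updates the EMAs once per trajectory rather than on batch means, so there is no $1/128$ variance reduction and the additive noise floor is set by the per-trajectory reward variance (which becomes small only near the impasse itself).
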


\begin{proof}
Recall the EMA update rule: $b_p^{k,t} = \alpha_k b_p^{k,t-1} + (1-\alpha_k)R_p(\tau)$. By recursively expanding this update, the baseline at time $t$ can be expressed as a weighted sum of historical rewards. Taking the expectation and performing a first-order Taylor expansion of the expected return $\mu(t-j) \approx \mu(t) - j \cdot \dot{\mu}(t)$:
{\small
\begin{align}
    \mathbb{E}[b_p^{k,t}] &= (1 - \alpha_k) \sum_{j=0}^{\infty} \alpha_k^j \mu(t-j) \nonumber \\
    &\approx (1 - \alpha_k) \sum_{j=0}^{\infty} \alpha_k^j \left( \mu(t) - j \cdot \dot{\mu}(t) \right) \nonumber \\
    &= \mu(t) \underbrace{(1 - \alpha_k) \sum_{j=0}^{\infty} \alpha_k^j}_{=1} - \dot{\mu}(t) (1 - \alpha_k) \sum_{j=0}^{\infty} j \alpha_k^j.
\end{align}
}%
Using the geometric series identity $\sum_{j=0}^{\infty} j x^j = \frac{x}{(1-x)^2}$ for $|x|<1$, we substitute $x=\alpha_k$:
\begin{equation}
    \mathbb{E}[b_p^{k,t}] \approx \mu(t) - \dot{\mu}(t) \cdot \frac{\alpha_k}{1 - \alpha_k}.
\end{equation}
Let $\mathcal{T}_k = \frac{\alpha_k}{1 - \alpha_k}$ denote the effective time lag. The expected divergence is:
\begin{equation}
\begin{split}
    \mathbb{E}[\delta^{(t)}] &= \left| \mathbb{E}[b_p^{\text{fast}, t}] - \mathbb{E}[b_p^{\text{slow}, t}] \right| \\
    &\approx \left| (\mu(t) - \mathcal{T}_{\text{fast}}\dot{\mu}(t)) - (\mu(t) - \mathcal{T}_{\text{slow}}\dot{\mu}(t)) \right| \\
    &= (\mathcal{T}_{\text{slow}} - \mathcal{T}_{\text{fast}}) \cdot |\dot{\mu}(t)|.
\end{split}
\end{equation}
Since $\alpha_{\text{slow}} > \alpha_{\text{fast}}$, we have $\mathcal{T}_{\text{slow}} > \mathcal{T}_{\text{fast}}$, ensuring the coefficient is positive. Thus, the baseline divergence serves as a direct proxy for the training velocity $|\dot{\mu}(t)|$.
\end{proof}

\subsubsection{Match Entropy as Time-Varying Variance Proxy}
We now establish the mathematical relationship between the observed Match Entropy $H_{\text{match}}^{(t)}$ and the underlying reward variance.

\begin{lemma}[Entropy-Variance Correlation]
Let $\nu(t) = \text{Var}[R_p(\tau)]$ be the underlying reward variance at time step $t$. As the match entropy vanishes, the reward variance asymptotically approaches zero:
\begin{equation}
    \lim_{H_{\text{match}}^{(t)} \to 0} \nu(t) = 0.
\end{equation}
Consequently, maintaining high entropy is a necessary condition for maintaining non-zero reward variance.
\end{lemma}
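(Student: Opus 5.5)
The plan is to view the outcome distribution $P^{(t)}=\{p_{\text{win}},p_{\text{draw}},p_{\text{loss}}\}$ as the only source of randomness in $R_p(\tau)$. Because rewards are $+1$, $0$, $-1$ for win, draw and loss, $R_p(\tau)$ is a discrete random variable on $\{1,0,-1\}$ with these probabilities. Its variance is then an explicit continuous function of $P^{(t)}$:
\begin{equation}
\nu(t) = p_{\text{win}} + p_{\text{loss}} - (p_{\text{win}} - p_{\text{loss}})^2 .
\end{equation}
This function vanishes exactly at the three vertices of the probability simplex, i.e.\ when a single outcome has probability one. The argument therefore reduces to two facts. First, $H_{\text{match}}^{(t)} \to 0$ forces $P^{(t)}$ to approach a vertex. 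Second, $\nu$ is continuous on the simplex and equals zero at the vertices.

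For the first fact, I will use a quantitative bound rather than a compactness argument. Let $p_{\max}=\max_o p_o$. Since $-\log p_o \ge -\log p_{\max}$ for every $o$, the unnormalized entropy satisfies $H \ge -\log p_{\max}$. A sharper and more convenient bound uses $-p\log p \ge p(1-p)$ for $p\in[0,1]$, which gives
\begin{equation}
\log(3)\, H_{\text{match}}^{(t)} \;\ge\; \sum_o p_o(1-p_o) \;=\; 1 - \sum_o p_o^2 .
\end{equation}
The right-hand side is the Gini impurity, and it dominates $1-p_{\max}$ up to a constant. Hence $H_{\text{match}}^{(t)}\to 0$ implies $p_{\max}\to 1$, so the two non-dominant probabilities tend to zero.

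For the second fact, I will bound $\nu(t)$ directly by the Gini impurity instead of invoking continuity abstractly. The cleanest route is to write the variance as a pairwise sum over outcome values:
\begin{equation}
\nu(t) = \tfrac12\sum_{o,o'} p_o p_{o'} (r_o - r_{o'})^2 .
\end{equation}
Every squared difference is at most $4$, so $\nu(t) \le 2\sum_{o\neq o'} p_op_{o'} = 2\bigl(1-\sum_o p_o^2\bigr)$. Chaining this with the entropy bound yields
\begin{equation}
\nu(t) \le 2\log(3)\, H_{\text{match}}^{(t)} ,
\end{equation}
which gives the limit immediately, together with an explicit rate. The \emph{consequently} clause is then the contrapositive of the same inequality: any variance level $\nu(t)\ge \epsilon>0$ forces $H_{\text{match}}^{(t)} \ge \epsilon/(2\log 3)$.

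The main obstacle is interpretive rather than technical, namely the meaning of ``$\lim_{H\to 0}$'', since $t$ indexes training steps and $H$ is not a free variable. I will state the result as holding along any sequence of steps with $H_{\text{match}}^{(t)}\to 0$, which the uniform bound above covers. I will also note that if $H_{\text{match}}^{(t)}$ is computed from empirical batch frequencies, the same inequality holds for the empirical variance of the batch rewards.
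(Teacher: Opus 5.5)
Your proof is correct, and it follows a genuinely different route from the paper's. The paper argues qualitatively. It writes $\nu(t)=\sum_i p_i(t)\bigl(r_i-\mathbb{E}[R_p(\tau)]\bigr)^2$ for a general finite outcome set with fixed rewards $r_1,\dots,r_K$. It then asserts, appealing to information theory without proof, that $H_{\text{match}}^{(t)}\to 0$ if and only if the mass concentrates on a single outcome $o_k$ with $p_k(t)\to 1$, and substitutes these limits into the variance. You instead pass both quantities through the Gini impurity $1-\sum_o p_o^2$: $-p\log p\ge p(1-p)$ on the entropy side, and the pairwise representation of the variance on the other. This gives the uniform inequality $\nu(t)\le 2\log(3)\,H_{\text{match}}^{(t)}$.

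Your route buys several things:
\begin{itemize}
\item an explicit linear rate;
\item a one-line inequality in place of the concentration step that the paper merely asserts;
\item independence from \emph{which} outcome dominates (the paper's ``$p_k\to1$'' tacitly fixes $k$ along the sequence, which would otherwise need a subsequence argument);
\item a precise meaning for the limit over training steps;
\item a quantitative version of the ``consequently'' clause ($\nu(t)\ge\epsilon$ forces $H_{\text{match}}^{(t)}\ge\epsilon/(2\log 3)$), where the paper has only the qualitative statement.
\end{itemize}

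What the paper's version has that yours, as written, does not is generality in the reward values. This matters if, for instance, the $-1.5$ format penalty from the appendix is treated as its own outcome class. Your argument extends verbatim under the same assumption the paper makes, that each outcome carries a single fixed reward. With $K$ outcomes and reward range $D=\max_o r_o-\min_o r_o$, you get $\nu(t)\le \tfrac{D^2}{2}\bigl(1-\sum_o p_o^2\bigr)\le \tfrac{D^2}{2}\log(K)\,H_{\text{match}}^{(t)}$.

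A minor aside: your claim that the Gini lower bound is sharper than $-\log p_{\max}$ holds only near the vertices; at the uniform distribution it is weaker. Nothing in your proof depends on that remark.
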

\label{lemma:entropy_variance}

\begin{proof}
Consider the discrete outcome space $\mathcal{O} = \{o_1, \dots, o_K\}$ (e.g., \{Win, Loss, Draw\}) with associated fixed reward values $\{r_1, \dots, r_K\}$. At time $t$, the policy $\pi_{\theta_t}$ induces a probability distribution $p_i(t) = P(R_p(\tau) = r_i)$.

The variance $\nu(t)$ is formally defined as:
\begin{equation}
    \nu(t) = \sum_{i=1}^K p_i(t) \left( r_i - \mathbb{E}[R_p(\tau)] \right)^2.
\end{equation}
From information theory, the entropy $H_{\text{match}}^{(t)} = - \sum p_i(t) \log p_i(t)$ approaches zero if and only if the probability mass concentrates on a single dominant outcome $o_k$. Mathematically, this implies the limit behavior:
\begin{equation}
    p_k(t) \to 1 \quad \text{and} \quad p_{j \ne k}(t) \to 0.
\end{equation}
In this limit, the expected return converges to the reward of the dominant outcome: $\mathbb{E}[R_p(\tau)] \to r_k$. Substituting these limits into the definition of $\nu(t)$:
\begin{equation}
\begin{split}
    \lim_{H_{\text{match}}^{(t)} \to 0} \nu(t) &= \lim_{p_k \to 1} \Big[ p_k(t)(r_k - r_k)^2 \\
    &\quad + \sum_{j \ne k} p_j(t)(r_j - r_k)^2 \Big] \\
    &= 1 \cdot 0 + 0 = 0.
\end{split}
\end{equation}
This derivation proves that $H_{\text{match}}^{(t)}$ and $\nu(t)$ are positively correlated in the asymptotic limit: a collapse in match entropy strictly necessitates a collapse in the reward variance $\nu(t)$.
\end{proof}

\subsubsection{Bayesian Derivation of Intervention Coefficient}
We presented the intervention coefficient $\lambda^{(t)}$ in a specific functional form in the main text. Here, we derive this form from a general Bayesian perspective.

\begin{theorem}[Bayesian Form of $\lambda$]
Let $Z_t \in \{0, 1\}$ be a latent variable where $Z_t=1$ indicates a evolution impasse. The coefficient $\lambda^{(t)}$ represents the posterior probability $P(Z_t=1 \mid \mathcal{O}_t)$ given observations $\mathcal{O}_t$.
\end{theorem}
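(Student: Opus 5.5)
We will read the statement as an interpretive claim. The claim is that there exists a natural generative model for the observations $\mathcal{O}_t=(\delta^{(t)},H_{\text{match}}^{(t)})$, where $\delta^{(t)}=|b_p^{\text{fast},(t)}-b_p^{\text{slow},(t)}|$, under which the posterior $P(Z_t=1\mid\mathcal{O}_t)$ coincides with $\lambda^{(t)}=\sigma^{(t)}\sqrt{1-H_{\text{match}}^{(t)}}$. The plan is constructive: specify a model and then compute the posterior. We first use the product structure of $\lambda^{(t)}$ to decompose the impasse event. We write $Z_t=S_t\wedge D_t$, where $S_t\in\{0,1\}$ indicates stationarity of the expected return (zero velocity $\dot\mu$) and $D_t\in\{0,1\}$ indicates degeneracy of the outcome distribution (vanishing reward variance $\nu(t)$). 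This matches the Healthy Exploration / Evolution Impasse dichotomy stated after the definition of $\lambda^{(t)}$. We then assume that $S_t$ depends on the observations only through $\delta^{(t)}$ and $D_t$ only through $H_{\text{match}}^{(t)}$, with $S_t\perp D_t$ conditionally on $\mathcal{O}_t$. Under this assumption,
\[
P(Z_t=1\mid\mathcal{O}_t)=P(S_t=1\mid\delta^{(t)})\cdot P(D_t=1\mid H_{\text{match}}^{(t)}),
\]
and it suffices to identify each factor.

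For the first factor we invoke the Divergence-Velocity Equivalence proposition. There $\delta^{(t)}\approx C|\dot\mu(t)|$, so $\delta^{(t)}$ is a noisy reading of the velocity. We choose a logistic-type likelihood ratio. We take $S_t=0$ (the policy is moving) to make large $\delta$ exponentially more likely, with likelihood ratio $p(\delta\mid S=0)/p(\delta\mid S=1)=e^{2\delta}-1$, with prior odds absorbed. A short computation gives
\[
P(S_t=1\mid\delta)=\frac{1}{1+(e^{2\delta}-1)}=e^{-2\delta},
\]
which is not exactly $1-\tanh\delta$. So we instead solve backward. We set the posterior to $1-\tanh\delta=2/(1+e^{2\delta})$, which corresponds to the likelihood ratio $(e^{2\delta}-1)/2$. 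This ratio is nonnegative, vanishes at $\delta=0$ (the stationary evidence is decisive there), and is increasing. We then verify that it is realizable by a legitimate pair of densities on $\delta\ge0$, for example a point mass at $0$ plus an exponential tail for $S=1$, with $S=0$ defined by reweighting. This gives $P(S_t=1\mid\delta^{(t)})=\sigma^{(t)}$.

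For the second factor we use the Entropy-Variance lemma, which says $H_{\text{match}}\to0$ forces $\nu\to0$. We again choose the likelihood so that the posterior equals $\sqrt{1-H}$. This is a valid probability in $[0,1]$ because $H_{\text{match}}$ is normalized by $\log|P|$. It is monotone decreasing in $H$, equals $1$ at $H=0$ (in agreement with the lemma) and $0$ at maximal entropy. We will note that the square root corresponds to a prior or likelihood concave in $1-H$. For instance, we can treat $D_t$ as the event that two independent "degeneracy tests", each with success probability governed by $1-H$, jointly yield the geometric-mean form. Alternatively, we simply take the stated likelihood ratio $\sqrt{1-H}/(1-\sqrt{1-H})$ times the prior odds.

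The main obstacle is that the statement is not a theorem in the strict sense: any $[0,1]$-valued function of $\mathcal{O}_t$ is a posterior under some model. The real content therefore lies in making the modeling assumptions (conjunctive latent structure, conditional independence, monotone likelihood ratios) explicit and in justifying them with the two earlier results. We will state these assumptions openly, verify boundary and monotonicity consistency ($\lambda=1$ iff $\delta=0$ and $H=0$; $\lambda=0$ at maximal entropy), and present the derivation as showing that $\lambda^{(t)}$ is a consistent posterior rather than a uniquely forced one.
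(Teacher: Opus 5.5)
Your proposal takes the same route as the paper. Like the paper, you split the impasse into a stationarity event and a determinism event, factor the posterior via conditional independence, and identify the two factors with the kernels $1-\tanh(\delta^{(t)})$ and $\sqrt{1-H_{\text{match}}^{(t)}}$; the paper simply \emph{selects} these kernels and writes only $\propto$, so your exact equality under the conjunction-plus-independence model and your realizability check are if anything more careful. The one slip is the ``two independent degeneracy tests'' illustration: a conjunction of independent tests multiplies probabilities, so two tests with success probability $1-H_{\text{match}}^{(t)}$ give $(1-H_{\text{match}}^{(t)})^2$, not $\sqrt{1-H_{\text{match}}^{(t)}}$, and you should drop it and rely on the direct likelihood-ratio choice, which suffices.
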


\begin{proof}
We decompose the degenerate state into two independent necessary conditions: Stationarity ($\mathcal{S}$) and Determinism ($\mathcal{D}$). Using Bayes' theorem and assuming conditional independence:
\begin{equation}
    \lambda^{(t)} \triangleq P(Z_t=1) \propto P(\mathcal{S} \mid \delta^{(t)}) \cdot P(\mathcal{D} \mid H_{\text{match}}^{(t)}).
\end{equation}
To map observations to probabilities, we introduce two kernel functions $\mathcal{K}_{\mathcal{S}}$ and $\mathcal{K}_{\mathcal{D}}$ with range $[0, 1]$:
\begin{enumerate}
    \item \textbf{Stagnation Kernel $\mathcal{K}_{\mathcal{S}}$:} Since $\delta^{(t)} \propto |\text{Velocity}|$, stationarity implies $\delta^{(t)} \to 0$. We select the hyperbolic tangent kernel:
    \begin{equation}
         \sigma^{(t)} = P(\mathcal{S} \mid \delta^{(t)}) = 1 - \tanh(\delta^{(t)}).
    \end{equation}
    \item \textbf{Determinism Kernel $\mathcal{K}_{\mathcal{D}}$:} Since low entropy implies determinism, we select the radical kernel to ensure high sensitivity near the boundary of exploration ($H \approx 1$):
    \begin{equation}
        \gamma^{(t)} = P(\mathcal{D} \mid H_{\text{match}}^{(t)}) = \sqrt{1 - H_{\text{match}}^{(t)}}.
    \end{equation}
\end{enumerate}
Substituting these specific kernels yields the instantiation used in \method: $\lambda^{(t)} = \sigma^{(t)} \cdot \gamma^{(t)}$.
\end{proof}

\subsection{Gradient Dynamics Analysis}
\label{sec:gradient_analysis}
\subsubsection{The Vanishing Gradient of Standard Methods}
We first prove why standard methods fail in evolutionary
impasse.

\begin{theorem}[Variance-Dependent Gradient Bound]
\label{thm:grad_bound}
Assume the policy $\pi_\theta$ is Lipschitz continuous such that the norm of the score function is bounded by a constant $G_{\max}$. The magnitude of the expected policy gradient in standard Advantage Estimation is upper-bounded by the standard deviation of the reward:
\begin{equation}
    \| \nabla_\theta J(\theta) \| \le G_{\max} \cdot \sqrt{\nu(t)}.
\end{equation}
\end{theorem}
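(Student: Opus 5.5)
The plan is to write the standard-advantage gradient as a covariance between the reward and the score function, and then apply Cauchy--Schwarz. Write $g(\tau) = \sum_{t \in T_p} \nabla_\theta \log \pi_\theta(y_t^{(p)} \mid o_t, p)$ for the aggregated score of a trajectory. By the assumption in Theorem~\ref{thm:grad_bound}, we have $\|g(\tau)\| \le G_{\max}$ almost surely, where any turn-count factor is absorbed into $G_{\max}$. The standard estimator is $\nabla_\theta J(\theta) = \mathbb{E}_{\tau}[(R_p(\tau) - b_p)\, g(\tau)]$. The first step is the classical baseline-invariance identity $\mathbb{E}_\tau[g(\tau)] = \mathbb{E}_\tau[\nabla_\theta \log \pi_\theta(\tau)] = \nabla_\theta \sum_\tau \pi_\theta(\tau) = 0$. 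This lets us replace the constant baseline $b_p$ by the true mean $\mu(t) = \mathbb{E}[R_p(\tau)]$ without changing the expectation:
\begin{equation}
    \nabla_\theta J(\theta) = \mathbb{E}_\tau\big[(R_p(\tau) - \mu(t))\, g(\tau)\big].
\end{equation}
This is the key move. It makes the bound hold for \emph{any} constant baseline, including RAE's role-specific $b_p$, because the expectation is baseline-independent.

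The second step is to bound the norm. By Jensen, $\|\nabla_\theta J\| \le \mathbb{E}_\tau\big[|R_p(\tau)-\mu(t)|\,\|g(\tau)\|\big] \le G_{\max}\,\mathbb{E}_\tau|R_p(\tau)-\mu(t)|$. Applying Cauchy--Schwarz (equivalently, Jensen for the concave square root) gives $\mathbb{E}|R_p - \mu| \le \sqrt{\mathbb{E}(R_p-\mu)^2} = \sqrt{\nu(t)}$. Together these yield $\|\nabla_\theta J(\theta)\| \le G_{\max}\sqrt{\nu(t)}$.

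As a corollary, I would note that combining this with Lemma~\ref{lemma:entropy_variance} gives $\|\nabla_\theta J\| \to 0$ as $H_{\text{match}}^{(t)} \to 0$, which is the intended message about evolution impasse.

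The main obstacle is the step replacing $b_p$ by $\mu(t)$. It requires that $b_p$ be independent of the current trajectory $\tau$, i.e.\ a constant at step $t$. For the EMA baselines this holds only if $b_p$ is computed from previous steps, or if we treat the exact expected gradient rather than a batch estimate. I would state this assumption explicitly. I would also note that for a batch estimator with a stale baseline the bound picks up an extra term $G_{\max}|b_p-\mu(t)|$, which the expected-gradient formulation avoids. A secondary subtlety is the multi-turn sum: the score of the whole trajectory, not each turn, must be bounded by $G_{\max}$. I would interpret the Lipschitz assumption that way, or else carry a factor $|T_p|$.
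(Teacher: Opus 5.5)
Your proof is correct and follows essentially the same route as the paper: both combine Cauchy--Schwarz with the uniform bound $G_{\max}$ on the score function. The paper applies the vector form $\|\mathbb{E}[(R-\mu)\nabla\log\pi]\|^2 \le \nu(t)\,\mathbb{E}\|\nabla\log\pi\|^2$ directly, whereas you first move the norm inside the expectation and then use the scalar inequality $\mathbb{E}|R_p-\mu(t)| \le \sqrt{\nu(t)}$. Your explicit use of $\mathbb{E}_\tau[g(\tau)]=0$ to replace $b_p$ by $\mu(t)$ is a worthwhile addition, since the paper simply starts from the mean-centered gradient without justifying that step.
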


\begin{proof}
We start with the standard gradient definition $\nabla_\theta J = \mathbb{E}_{\tau}[(R-\mu)\nabla_\theta \log \pi_\theta]$.
First, we invoke the \textbf{Lipschitz continuity assumption}\cite{papini2018stochastic,agarwal2019reinforcement}: for smooth neural policies, the gradient of the log-likelihood is bounded:
\begin{equation}
    \sup_{\tau} \| \nabla_\theta \log \pi_\theta(\tau) \| \le G_{\max}.
\end{equation}
Next, we apply the \textbf{Cauchy-Schwarz inequality} for random variables ($\mathbb{E}[XY]^2 \le \mathbb{E}[X^2]\mathbb{E}[Y^2]$):
\begin{equation}
\begin{split}
    \| \nabla_\theta J \|^2 &= \left\| \mathbb{E} \left[ (R-\mu) \nabla \log \pi \right] \right\|^2 \\
    &\le \mathbb{E} \left[ (R-\mu)^2 \right] \cdot \mathbb{E} \left[ \| \nabla \log \pi \|^2 \right] \\
    &= \nu(t) \cdot \mathbb{E} \left[ \| \nabla \log \pi \|^2 \right].
\end{split}
\end{equation}
Substituting the bound $G_{\max}$:
\begin{equation}
    \| \nabla_\theta J \|^2 \le \nu(t) \cdot G_{\max}^2 \implies \| \nabla_\theta J \| \le G_{\max} \sqrt{\nu(t)}.
\end{equation}
\textbf{Implication:} In a evolution impasse, Lemma~\ref{lemma:entropy_variance} dictates that $\nu(t) \to 0$. Consequently, the gradient magnitude $\| \nabla J \|$ is forced to zero. The optimization halts specifically because the \textit{signal variance} $\nu(t)$ required to drive the gradient has vanished.
\end{proof}

\subsubsection{Gradient Restoration}
We prove that \method~can effectively recovers gradient. 

\begin{theorem}[Push-Pull Gradient Restoration]
In a degenerate state where natural variance $\nu(t) \to 0$, \method~constructs a non-vanishing synthetic gradient field driven by the historical value range $[V_{\min}, V_{\max}]$.
\end{theorem}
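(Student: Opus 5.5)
Our plan is to work directly with the reshaped advantage $\tilde{A}_p(\tau)=R_p(\tau)-\tilde{b}_p(\tau)$ in the degenerate regime. We then show that its contribution to the expected gradient does not collapse, even though the natural variance $\nu(t)\to 0$ forces the bound of Theorem~\ref{thm:grad_bound} to vanish.

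\textbf{Step 1: the intervention coefficient saturates.} In the degenerate state, Lemma~\ref{lemma:entropy_variance} (via its converse direction, concentration on $o_{\text{dom}}$) gives $H_{\text{match}}^{(t)}\to 0$. Stationarity gives $\delta^{(t)}\to 0$ by the Divergence-Velocity Equivalence, so $\sigma^{(t)}\to 1$. Hence $\lambda^{(t)}\to 1$ and $\tilde b_p(\tau)\to V_{\text{asym}}(\tau)$.

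\textbf{Step 2: the reshaped advantage splits by outcome class.} The advantage then becomes
\begin{equation*}
\tilde A_p(\tau)\approx \mathbb{I}(o_\tau=o_{\text{dom}})\,(r_{\text{dom}}-V_{\max}) + \mathbb{I}(o_\tau\neq o_{\text{dom}})\,(R_p(\tau)-V_{\min}).
\end{equation*}
Because $V_{\max},V_{\min}$ are running extrema of $b_p^{\text{fast}}$, the range $\Delta=V_{\max}-V_{\min}$ is nondecreasing in $t$. So if the training history contained any excursion before collapse, $\Delta\ge\Delta_0>0$ persists even as $\nu(t)\to0$. The current fast baseline sits near $r_{\text{dom}}$ and lies inside $[V_{\min},V_{\max}]$. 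We argue that at least one of $r_{\text{dom}}-V_{\max}\le 0$ or $r_{\text{dom}}-V_{\min}\ge 0$ is bounded away from zero by roughly $\Delta_0/2$. On the rare class, $R_p-V_{\min}$ is positive whenever the rare outcome beats the historical minimum. This gives the push term, a negative signal on the dominant class, and the pull term, a positive signal on the rare class.

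\textbf{Step 3: the gradient field is non-vanishing.} We write
\begin{equation*}
\nabla J \approx p_{\text{dom}}(r_{\text{dom}}-V_{\max})\,\mathbb{E}[\nabla\log\pi\mid \text{dom}] + \sum_{j\neq\text{dom}} p_j(r_j-V_{\min})\,\mathbb{E}[\nabla\log\pi\mid j].
\end{equation*}
The second-moment quantity $\mathbb{E}[\tilde A^2]$ stays at least $p_{\text{dom}}(V_{\max}-r_{\text{dom}})^2$, which is of order $\Delta_0^2$ rather than of order $\nu(t)$. This is the ``synthetic variance'' that replaces $\nu(t)$ in the Cauchy--Schwarz bound.

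\textbf{Main obstacle.} Cauchy--Schwarz only gives an upper bound, so a lower bound on the gradient norm is the hard part. Since $\tilde b$ depends on $\tau$, the baseline no longer cancels in expectation, and the first term is $(r_{\text{dom}}-V_{\max})\,\mathbb{E}[\nabla\log\pi\mid\text{dom}]\,p_{\text{dom}}$. The score-function identity $\sum_o p_o\,\mathbb{E}[\nabla\log\pi\mid o]=\nabla\sum_o p_o=0$ gives $p_{\text{dom}}\mathbb{E}[\nabla\log\pi\mid\text{dom}]=-\sum_{j\neq\text{dom}}p_j\mathbb{E}[\nabla\log\pi\mid j]$. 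We would first try substituting this identity to rewrite the gradient as $\sum_{j\ne\text{dom}} p_j\big[(r_j-V_{\min})-(r_{\text{dom}}-V_{\max})\big]\mathbb{E}[\nabla\log\pi\mid j]$. The bracket is at least $\Delta_0$ whenever $r_j\ge r_{\text{dom}}$, and more generally whenever $r_j - r_{\text{dom}} > -\Delta_0$.

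This coefficient no longer depends on the vanishing variance. The claim therefore reduces to a nondegeneracy assumption that the rare-class scores $\nabla_\theta\log p_j$ are non-zero, i.e.\ the policy can still move probability toward rare outcomes. We would state this as an explicit assumption. We would conclude that the direction increases $p_j$ with magnitude proportional to $\Delta_0$ rather than to $\sqrt{\nu(t)}$.
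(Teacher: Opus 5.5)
\textbf{There is a genuine gap, in your ``Main obstacle'' step.} Steps 1--3 essentially reproduce the paper's push--pull split. The step where you go beyond it fails, and for a structural reason. Your identity gives
\[
\nabla J=\sum_{j\neq\text{dom}}\bigl(\Delta+r_j-r_{\text{dom}}\bigr)\,\nabla p_j,\qquad \nabla p_j=p_j\,\mathbb{E}[\nabla\log\pi\mid j].
\]
Under the bounded-score assumption of Theorem~\ref{thm:grad_bound}, $\|\nabla p_j\|\le p_j\,G_{\max}$. Hence $\|\nabla J\|\le G_{\max}\,(1-p_{\text{dom}})\max_{j}|\Delta+r_j-r_{\text{dom}}|\to 0$ as the outcome distribution collapses.

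The coefficient being at least $\Delta_0$ does not help, because it multiplies vectors that shrink linearly in $p_j$. Your nondegeneracy assumption (nonzero rare-class scores) would have to become $\|\nabla\log p_j\|\gtrsim 1/p_j$, which contradicts $G_{\max}$. More generally, an outcome-dependent baseline just replaces $R_p$ by a bounded reshaped outcome reward $c_o$. The expected update is then $\sum_o c_o\nabla p_o$, which is $O(1-p_{\text{dom}})$ for every such choice. \method{} only reweights each direction $\nabla p_j$ by $\Delta+r_j-r_{\text{dom}}$ instead of $r_j-r_{\text{dom}}$. So your conclusion ``magnitude proportional to $\Delta_0$ rather than $\sqrt{\nu(t)}$'' is false, and no expected-gradient argument can deliver the theorem.

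\textbf{What the paper does instead.} Its proof never computes $\mathbb{E}[\tilde A_p\nabla\log\pi]$. It fixes a sampled batch with $M$ dominant and $N$ rare trajectories. It writes the update as the push sum $\sum_{\mathcal{D}_{\text{dom}}}(R_p-V_{\max})\nabla\log\pi$ plus the pull sum $\sum_{\mathcal{D}_{\text{rare}}}(R_p-V_{\min})\nabla\log\pi$. It then argues that the $M$ individually nonzero push terms accumulate even when $N$ is tiny, and summarizes this as a synthetic variance $\nu_{\text{syn}}\propto(V_{\max}-V_{\min})^2$. That is a statement about the per-sample weights of the stochastic estimator, i.e.\ your Step~3 second moment, not a lower bound on $\|\mathbb{E}[\cdot]\|$. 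A defensible version of your argument should stop there and phrase its conclusion for the realized batch update.

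\textbf{Step~3 also needs a case split.} $b_p^{\text{fast}}$ is an EMA of rewards, so $V_{\max}$ never exceeds the top reward. When the dominant outcome is the top reward (a dominant win for one role), you get $V_{\max}\approx r_{\text{dom}}$. The push coefficient $r_{\text{dom}}-V_{\max}\ge 0$ is then close to zero, and $\mathbb{E}[\tilde A^2]\to 0$. In that case your Step~2 dichotomy only gives $r_{\text{dom}}-V_{\min}\ge\Delta_0/2$. This is not the advantage of any outcome class: the rare classes use $r_j-V_{\min}$ and carry vanishing probability anyway. The paper's assertion $V_{\max}\ge R_p(\tau)$ on $\mathcal{D}_{\text{dom}}$ has the same blind spot.
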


\begin{proof}
Consider a training batch $\mathcal{B}$ of size $B$, decomposed into a set of dominant trajectories $\mathcal{D}_{\text{dom}}$ (size $M$) and rare trajectories $\mathcal{D}_{\text{rare}}$ (size $N$), such that $B = M + N$ and $M \gg N$ (indicating a degenerate state).
When intervention is triggered ($\lambda^{(t)} \to 1$), the gradient update $\nabla J_p$ is calculated as the summation over these two subsets:
\begin{equation}
\begin{split}
    \nabla J_p &\approx \frac{1}{B} \Bigg[ \underbrace{\sum_{\tau \in \mathcal{D}_{\text{dom}}} (R_p(\tau) - V_{\max}) \nabla \log \pi_p(\tau)}_{\text{Aggregate Push Force}} \\
    &\quad + \underbrace{\sum_{\tau' \in \mathcal{D}_{\text{rare}}} (R_p(\tau') - V_{\min}) \nabla \log \pi_p(\tau')}_{\text{Aggregate Pull Force}} \Bigg].
\end{split}
\end{equation}
\begin{enumerate}
    \item \textbf{Symmetric Dilemma:} In standard methods, the baseline $b \approx \frac{1}{M}\sum R_p(\tau_{\text{dom}})$. Thus, the advantage for the $M$ dominant samples approaches zero. Since $N \to 0$, the total gradient vanishes.
    \item \textbf{Aggregate Push Mechanism ($M$ samples):} For the dominant set, we set the baseline to $V_{\max}$. Since $V_{\max} \ge R_p(\tau)$ for $\tau \in \mathcal{D}_{\text{dom}}$, the term $(R_p(\tau) - V_{\max})$ is strictly non-positive. Although individual gradients may be small, the summation over $M$ samples generates a substantial negative gradient field, actively suppressing the probability of the stagnant strategy.
    \item \textbf{Aggregate Pull Mechanism ($N$ samples):} For the rare set, we set the baseline to $V_{\min}$. The term $(R_p(\tau') - V_{\min})$ is maximized, assigning a high positive weight to these sparse signals. This amplifies the exploration direction even when $N$ is small.
\end{enumerate}
Through this summation, \method~effectively injects a \textbf{Synthetic Variance} $\nu_{\text{syn}} \propto (V_{\max} - V_{\min})^2$ into the batch. Unlike the natural variance $\nu(t)$ which vanishes as $N \to 0$, this synthetic variance remains robust, ensuring $\| \nabla J_p \| > 0$ and forcing the policy distribution to shift away from the evolution impasse.
\end{proof}

\begin{figure}[!t]
    \centerline{\includegraphics[width=\columnwidth]{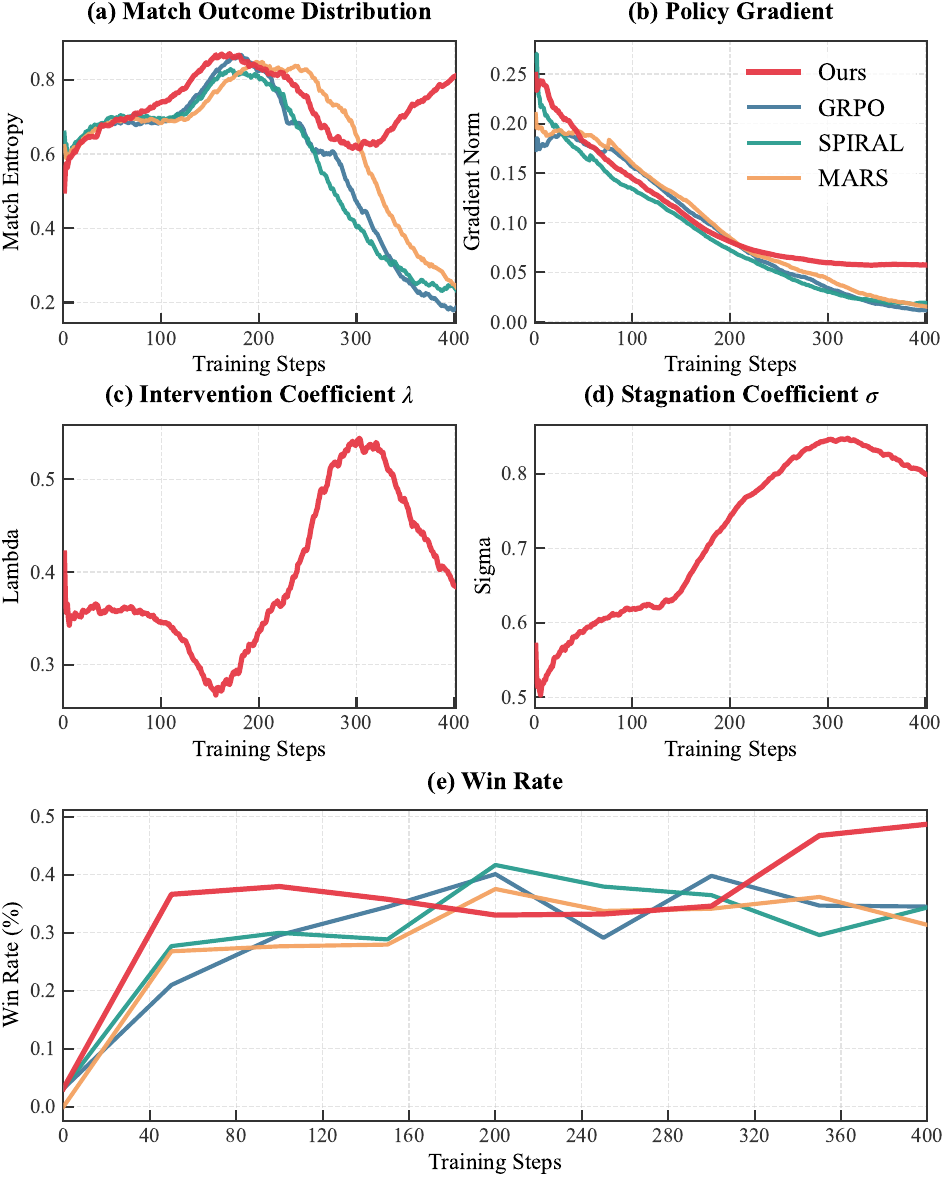}}
    \caption{Training dynamics on Don't Say It.}
    \label{fig:exp_dontsayit}
\end{figure}

\begin{algorithm*}[t]
    \caption{\method: \fullmethod}
    \label{algorithm:method}
    \begin{algorithmic}[1]
    \Require Policy $\pi_\theta$, decay rates $\alpha_{\text{fast}} < \alpha_{\text{slow}}$
    \State Initialize baselines $b_{p}^{\text{fast}} = b_{p}^{\text{slow}} = 0$ for $p \in \{0,1\}$
    \While{not converged}
        \State \textbf{// Self-Play Trajectory Collection}
        \State Collect batch $\mathcal{B} = \{\tau_k\}_{k=1}^{K}$ via self-play, where each $\tau_k$ contains alternating 
        \Statex \quad\quad actions $\{y_t^{(p)}\}$ from $\pi_\theta(\cdot \mid o_t, p)$ and terminal rewards $R_0 = -R_1 = \rho(s_T)$
        
        \State \textbf{// Batch-Level Statistics}
        \State Compute outcome distribution $P(o \mid p)$ for $o \in \{\text{win}, \text{draw}, \text{loss}\}$ from $\mathcal{B}$
        \State Identify dominant outcome $o_{\text{dom}}^{p} \leftarrow \arg\max_{o} P(o \mid p)$ for each $p$
        \State Compute normalized match entropy $H_{\text{match}} \leftarrow -\frac{1}{\log 3} \sum_{o} P(o) \log P(o)$
        
        \State \textbf{// Dual-Timescale Update \& Asymmetric Advantage Reshaping}
        \For{$\tau \in \mathcal{B}$}
            \For{$p \in \{0, 1\}$}
                \State \textit{// Update dual-timescale baselines and global bounds}
                \State $b_{p}^{\text{fast}} \leftarrow \alpha_{\text{fast}} \cdot b_{p}^{\text{fast}} + (1-\alpha_{\text{fast}}) \cdot R_p(\tau)$
                \State $b_{p}^{\text{slow}} \leftarrow \alpha_{\text{slow}} \cdot b_{p}^{\text{slow}} + (1-\alpha_{\text{slow}}) \cdot R_p(\tau)$
                \State $V_{\max} \leftarrow \max(V_{\max}, b_{p}^{\text{fast}})$, \quad $V_{\min} \leftarrow \min(V_{\min}, b_{p}^{\text{fast}})$
                
                \State \textit{// Compute intervention coefficient via baseline divergence}
                \State $\sigma_{p} \leftarrow 1 - \tanh\left(\left|b_{p}^{\text{fast}} - b_{p}^{\text{slow}}\right|\right)$ \Comment{Stagnation coefficient}
                \State $\lambda_{p} \leftarrow \sigma_{p} \cdot \sqrt{1 - H_{\text{match}}}$ \Comment{Intervention coefficient}
                
                \State \textit{// Construct asymmetric baseline via outcome scarcity}
                \If{$o_\tau = o_{\text{dom}}^{p}$}
                    \State $V_{\text{asym}}(\tau) \leftarrow V_{\max}$ 
                \Else
                    \State $V_{\text{asym}}(\tau) \leftarrow V_{\min}$ 
                \EndIf
                
                \State \textit{// Fuse baselines and compute reshaped advantage}
                \State $\tilde{b}_{p}(\tau) \leftarrow (1 - \lambda_{p}) \cdot b_{p}^{\text{slow}} + \lambda_{p} \cdot V_{\text{asym}}(\tau)$
                \State $\tilde{A}_{p}(\tau) \leftarrow R_p(\tau) - \tilde{b}_{p}(\tau)$
            \EndFor
        \EndFor
        
        \State Update $\theta$ using policy gradient with reshaped advantages $\tilde{A}_{p}$
    \EndWhile
    \end{algorithmic}
\end{algorithm*}
\section{Implementation details}
\subsection{Baselines}
\label{app:baselines}
For a fair comparison, all methods use the same amount of training transitions for each task. For online RL methods (e.g., SPIRAL, MARS, GRPO), all RL hyperparameters and configurations remain consistent with our method, except for their respective advantage estimation schemes. For the offline RL method (e.g.,SPAG), we adopt the same reward computation logic and three-iteration update procedure as in the original work. Due to significant performance degradation of baseline methods on the Negotiation, we report their best checkpoints selected based on training curves, while using the final checkpoint (step 400) for our method; for other tasks (Don't Say It, Two Dollar), all methods are evaluated at the final checkpoint (step 400).

\subsection{Tasks}
\label{app:tasks}
\paragraph{Don't Say It} tests linguistic manipulation and theory of mind through adversarial dialogue. Each player is assigned a secret target word and must guide their opponent into saying it during natural conversation, while avoiding their opponent's unknown target word. Success depends on tracking dialogue states, predicting opponent responses, and detecting potential traps. Beyond strategic reasoning, players should maintain natural conversational flow to avoid arousing suspicion. These capabilities are relevant to persuasion, deception detection, and defense against social engineering. The environment ID in TextArena is \texttt{DontSayIt-v0} with \texttt{max\_turns=20}.

\paragraph{Negotiation} involves multi-attribute trading under incomplete information. Players trade five commodity types (Wheat, Wood, Sheep, Brick, Ore) to maximize their portfolio value, determined by private utility functions unknown to the opponent. The action space supports bundled offers, rejections, and binding acceptances.  Success requires not only inferring opponent preferences and identifying mutually beneficial trades, but also articulating proposals persuasively and building cooperative rapport through dialogue. These skills transfer to resource allocation and mixed-motive multi-agent coordination. The environment ID in TextArena is \texttt{SimpleNegotiation-v0} with \texttt{max\_turns=10}. Note that our setup matches the original game specification; in contrast, SPIRAL simplified the environment from five commodities to two.

\paragraph{Two Dollar} models bargaining with hidden constraints. Players divide a fixed pool (e.g., \$2.00) while following secret role-specific instructions that impose behavioral restrictions (e.g., aggression levels) or minimum acceptable amounts. Success requires inferring opponent constraints from behavioral cues, adapting communication style to different personas, and using persuasive dialogue to reach agreement while maintaining role consistency. These capabilities are relevant to constrained decision-making and human-AI coordination in mixed-motive settings. The environment ID in TextArena is \texttt{TwoDollar-v0} with \texttt{max\_turns=10}.

Across all three tasks, we align our reward mechanism with the SPIRAL~\citep{liu2025spiral}: a win yields a reward of $+1$, a loss yields $-1$, and a draw yields $0$. Additionally, format errors incur a penalty of $-1.5$.
\begin{figure}[!t]
    \centerline{\includegraphics[width=\columnwidth]{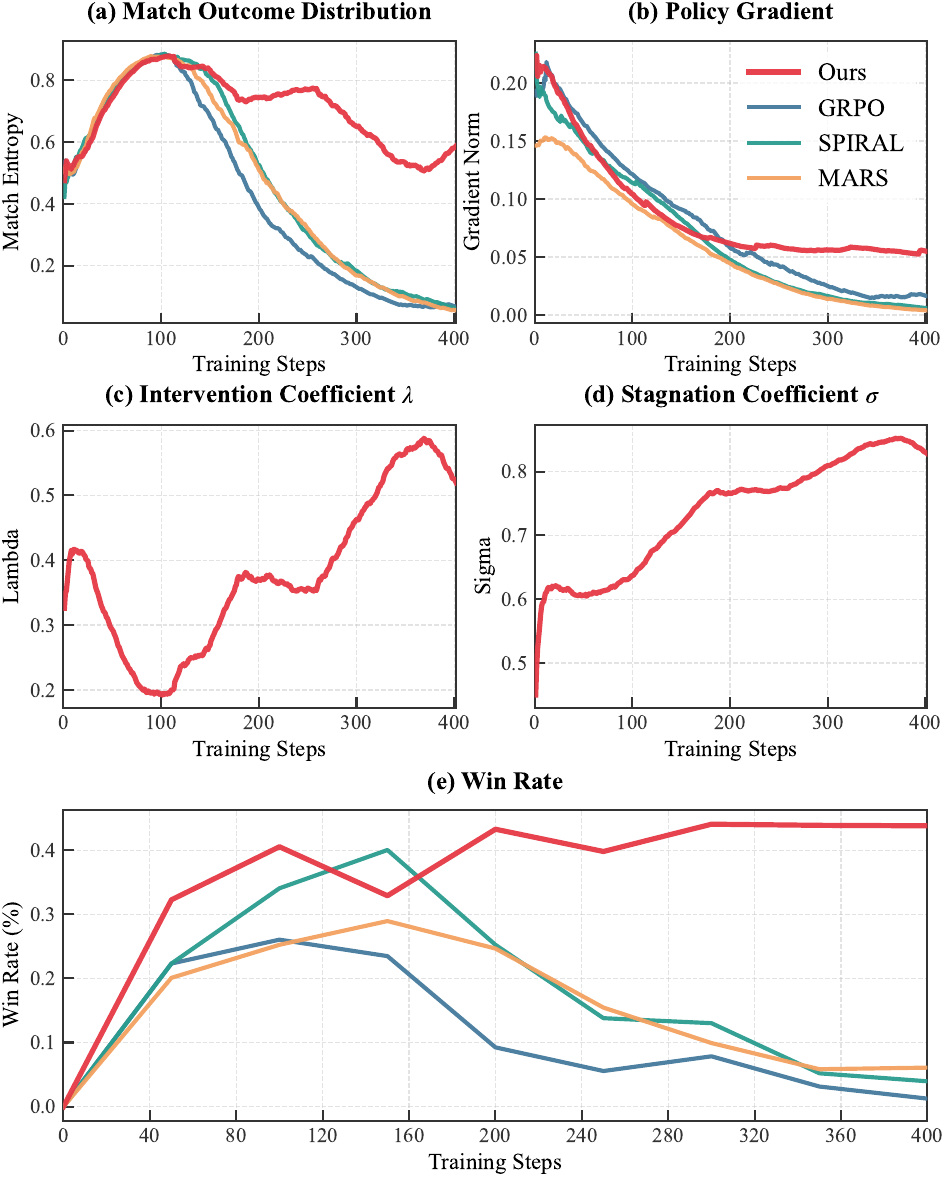}}
    \caption{Training dynamics on Negotiation.}
    \label{fig:exp_negotiation}
    \end{figure}

\subsection{Training Hyperparameters}
\label{app:hyperparameters}
\begin{table}[t]
    \centering
    \begin{tabular}{ll}
    \toprule
    Parameter & Value \\
    \midrule
    Maximum response length & $4096$ tokens \\
    Sampling temperature & $1.0$ \\
    (top P, top K) & $(1.0, -1)$ \\
    Optimizer & AdamW \\ 
    Adam parameters ($\beta_1, \beta_2$) & $(0.9, 0.95)$ \\
    Weight decay & $0.0$ \\
    Gradient norm clipping & $1.0$ \\
    Batch size & $128$ \\
    Training steps & $400$\\
    Discount factor & $0.9$ \\
    EMA decay rate & $0.95$ \\
    Learning rate scheduler & Constant \\
    Learning rate & $1\times 10^{-6}$ \\
    Inner proximal update epoch & $1$ \\
    KL loss coefficient & $0.0$ \\
    KL penalty coefficient & $0.0$ \\
    Policy clipping parameter & $0.2$ \\
    \bottomrule
    \end{tabular}
    \caption{Hyperparameter configurations used in all experiments.}
    \label{tab:hp}
\end{table}

These hyperparameters in \Cref{tab:hp} are held constant across all game environments and model scales to ensure a fair comparison. Our implementation leverages vLLM~\citep{kwon2023efficient} and Oat~\citep{liu2024oat}, establishing a robust distributed actor-learner architecture. The training infrastructure is deployed on 8 NVIDIA A800 GPUs, where parallel actors collect game trajectories concurrently while a centralized learner executes synchronous policy updates. Each experimental run required approximately 30 GPU-hours on a 8 NVIDIA A800 GPUs server.

\subsection{Out-of-distribution Evaluation}
\subsubsection{Increased Complexity Games}
\label{app:ood_game}
\paragraph{HardCore Don't Say It} increases difficulty by shifting from common words to open-domain vocabulary. By enabling \textit{hardcore} mode, target words are sampled from the full English lexicon, including abstract, low-frequency, and polysemous terms, rather than a simplified basic vocabulary. The conversation horizon is also reduced from 20 to 10 turns. This setup tests efficient semantic planning and generalization, requiring agents to rely on flexible reasoning rather than memorized steering patterns.

\paragraph{Random Value Negotiation} evaluates adaptation to non-stationary preferences. While the standard version follows a fixed utility ranking (e.g., Ore $>$ Wheat), the OOD setting randomizes resource valuations at each episode. Combined with a reduced horizon of 6 turns, agents cannot rely on learned priors about resource value. Success requires rapid inference of the opponent's preferences and effective trade formulation under time pressure.

\paragraph{Variable Sum Dollar} tests numerical generalization and strict constraint adherence. Unlike the standard fixed-sum setting (\$2.00), this variant randomizes the total pool size (ranging from \$2 to \$20), requiring agents to generalize their division strategies to unseen scales. The difficulty is increased by a shortened horizon of 6 rounds. This environment assesses whether agents can maintain logical consistency and role compliance when facing novel numerical parameters.

\begin{figure}[!t]
    \centerline{\includegraphics[width=\columnwidth]{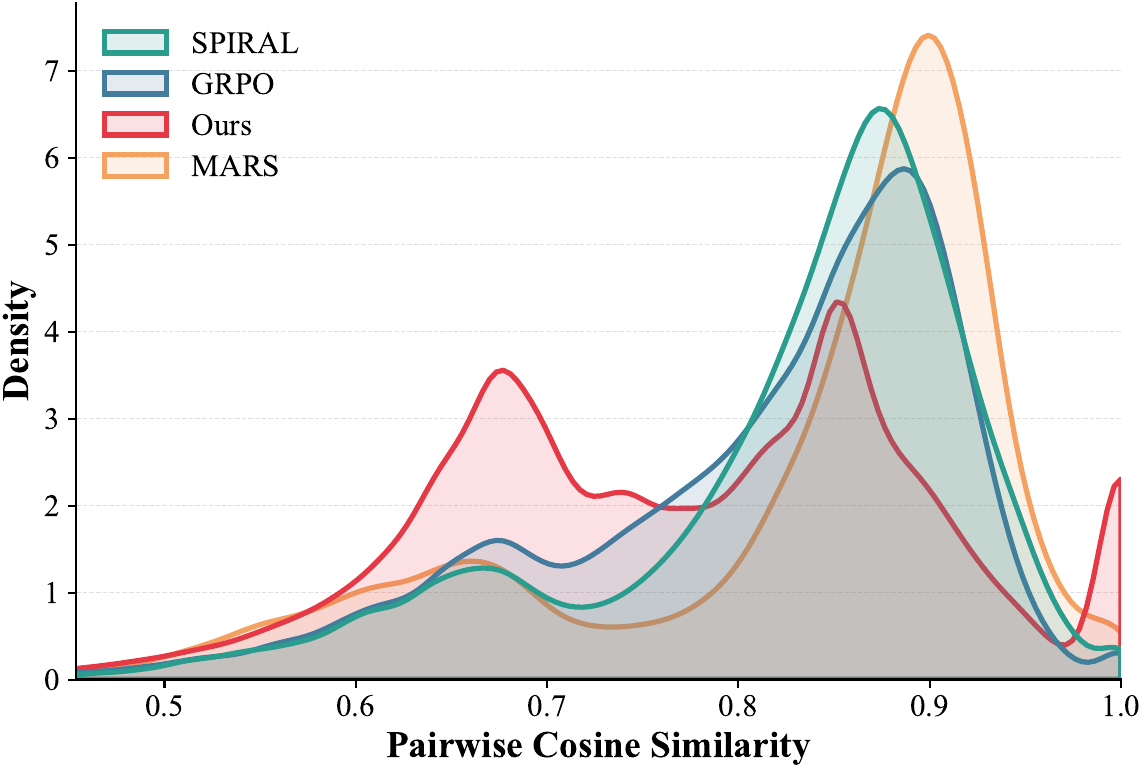}}
        \caption{Semantic diversity of actions.}
        \label{fig:diverse}
        \end{figure}
\subsubsection{General Reasoning Benchmark}
\label{app:ood_math_qa}
To assess the transferability of reasoning skills from gameplay to general domains, we evaluate our models on a suite of established benchmarks in the zero-shot evaluation setting. For each RL algorithm and backbone LLM, we report the averaged performance of three models trained on distinct social language games. All inference is performed with a temperature of 0.6 and a top-p of 0.95.

\paragraph{Math Benchmark} We report AVG@32 for AIME24~\citep{AIME24}, AIME25~\citep{AIME25} and AVG@16 for Minerva~\citep{NEURIPS2022_18abbeef}, AMC-23~\citep{AMC23}, Math500~\citep{hendrycks2021measuring}, Olympiad~\citep{he-etal-2024-olympiadbench}. These benchmark cover a wide range of topics including algebra, geometry, and competitive mathematic.

\paragraph{Question Answering Benchmark} We utilize GPQA-Diamond~\citep{rein2024gpqa}, which consists of graduate-level science questions and MMLU-Pro~\citep{NEURIPS2024_ad236edc}, a benchmark for multidisciplinary knowledge. We use simple-evals~\citep{openai2024simpleevals} for evluation.

\section{Semantic Diversity Analysis}
\label{app:diverse}
To further analyze the effectiveness of our method, we evaluate the semantic diversity of generated actions from the perspective of semantic space. Specifically, we employ an embedding model (Qwen3-Embedding-0.6B) to compute the cosine similarity between generated actions, with the results averaged across all three games. As illustrated in \Cref{fig:diverse}, our method exhibits significantly lower semantic similarity compared to the baselines, indicating that our approach successfully explores diverse strategies rather than collapsing into local optima.

\section{Hyperparameter Analysis on Fast Baselines}
\label{app:hyper}
\begin{table}[t]
    \caption{Hyperparameter analysis of $\alpha_{\text{fast}}$.}
    \small
    \centering
    \renewcommand{\arraystretch}{1.2}
    \resizebox{0.7\linewidth}{!}{
    \begin{tabular}{lcc}
    \toprule
    Methods  
    & \makecell{Don't Say It}
    & \makecell{Negotiation} \\
    \midrule
    \multicolumn{3}{l}{Qwen3-8B-Base w/ \method} \\
    $\alpha_{\text{fast}}=0.6$ 
    & 47.98 & 47.39\\

    $\alpha_{\text{fast}}=0.5$
    & 47.92 &  48.76\\
    
    $\alpha_{\text{fast}}=0.4$ 
    & 49.09  & 51.19\\
    \bottomrule
    \end{tabular}
    }
    \label{tab:exp_hyper}
\end{table}
We conduct a hyperparameter analysis on $\alpha_{\text{fast}}$, which controls how fast baselines perceive training dynamics and determines the asymmetric target values. As shown in \Cref{tab:exp_hyper}, \method~achieves comparable performance across the three tested values, with no significant performance degradation observed. This indicates that \method~is relatively insensitive to $\alpha_{\text{fast}}$ within the range [0.4, 0.6] and exhibits reasonable robustness.

\section{Computational Budget Analysis of Dual-baseline}
\label{app:computation}
We analyze the computational overhead introduced by \method. Our approach shares the same core architecture as existing methods, including multi-turn rollouts and policy updates. Since all compared approaches are critic-free and operate with a single actor LLM, they incur identical GPU memory usage and LLM rollout costs. The primary addition of \method~lies in the dual-baseline mechanism for advantage estimation, as described in \Cref{sec:method}. To quantify the computational costs, we train an LLM agent on Don't Say It using Qwen3-8B-Base and record a detailed breakdown of per-iteration training time. As illustrated in \Cref{fig:buget}, the dominant operations—rollouts and policy updates—account for 270.4s per iteration. In contrast, the additional components introduced by \method~incur negligible overhead: advantage estimation via dual baselines requires only 0.0042s, constituting less than 0.0016\% of the total per-iteration training time. This minimal computational burden stems from the fact that our dual-baseline advantage estimation only involves lightweight arithmetic operations on scalar values, without requiring any forward or backward passes through the large-scale LLM parameters. Consequently, \method~achieves substantial performance improvements while introducing negligible computational overhead.

\begin{figure}[!t]
    \centerline{\includegraphics[width=\columnwidth]{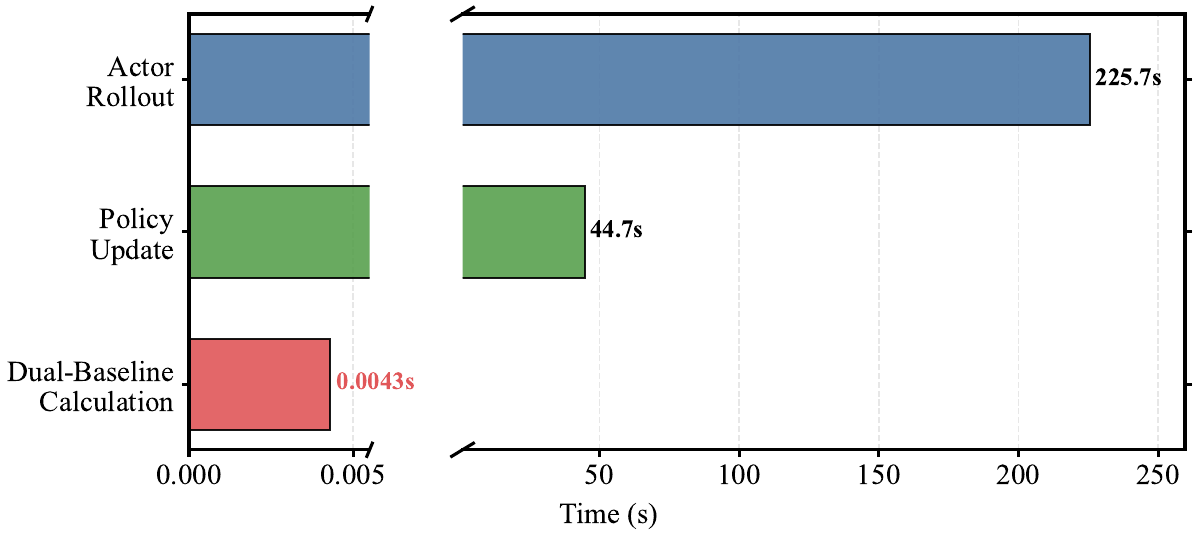}}
    \caption{Per-iteration training time. The x-axis uses a broken scale to accommodate small values}
    \label{fig:buget}
\end{figure}
\section{Statistical Significance}
\label{app:static}
\begin{table}[t]
    \caption{The results of significant tests.}
    \small
    \centering
    \renewcommand{\arraystretch}{1.2}
    \resizebox{\columnwidth}{!}{
    \begin{tabular}{lcccccccc}
    \toprule
    \multirow{2}{*}{Methods}
    & \multicolumn{2}{c}{Don't Say It}
    & \multicolumn{2}{c}{Negotiation}
    & \multicolumn{2}{c}{Two Dollar} \\
    \cmidrule(lr){2-3} \cmidrule(lr){4-5} \cmidrule(lr){6-7}
    & AVG. & p-value
    & AVG. & p-value
    & AVG. & p-value \\
    \midrule
    \multicolumn{7}{l}{\textit{Qwen3-4B-Base}} \\
    VANILLA
    & 3.39 & 0.0001
    & 1.04 & 0.0000
    & 1.56 & 0.0001 \\
    SPAG 
    & 26.17 & 0.0011
    & 16.76 & 0.0000
    & 25.52 & 0.0004 \\
    
    GRPO 
    & 42.01 & 0.0009
    & 19.21 & 0.0076
    & 27.91 & 0.0021 \\
    
    MARS 
    & 40.89 & 0.0045
    & 20.73 & 0.0216
    & 27.69 & 0.0022 \\
    
    SPIRAL 
    & 45.88 & 0.0037
    & 16.84 & 0.0041
    & 27.65 & 0.0268 \\
    
    \rowcolor{rowblue}
    \method~(Ours) 
    & \textbf{56.73} & -
    & \textbf{32.35} & -
    & \textbf{34.07} & - \\
    \midrule
    \multicolumn{7}{l}{\textit{Qwen3-8B-Base}} \\
    VANILLA
    & 15.17 & 0.0001
    & 5.69 & 0.0001
    & 2.13 & 0.0005 \\
    
    SPAG 
    & 41.21 & 0.0035
    & 13.31 & 0.0000
    & 19.31 & 0.0003 \\
    
    GRPO 
    & 34.77 & 0.0002
    & 14.70 & 0.0014
    & 26.00 & 0.0014 \\
    
    MARS 
    & 40.62 & 0.0008
    & 16.10 & 0.0000
    & 29.04 & 0.0040 \\
    
    SPIRAL 
    & 37.89 & 0.0018
    & 17.30 & 0.0000
    & 26.22 & 0.0008 \\
    
    \rowcolor{rowblue}
    \method~(Ours) 
    & \textbf{54.56} & -
    & \textbf{31.88} & -
    & \textbf{36.50} & - \\
    \bottomrule
    \end{tabular}
    }
    \label{tab:static}
    \end{table}
To rigorously evaluate the effectiveness of \method~relative to baseline approaches, we conduct statistical significance testing across all three tasks. Specifically, paired t-tests are performed to compare \method~against each baseline and assess whether the observed improvements are statistically reliable. As shown in \Cref{tab:static}, \method~consistently surpasses all baselines across the evaluated tasks, with all improvements achieving statistical significance (p < 0.05). These results provide strong evidence that \method~provides meaningful gains over baselines.

\section{The Use of Large Language Models}
During the preparation of this manuscript, the proprietary large language model Gemini-3-Pro was employed to enhance the linguistic quality of the text. Specifically, the model assisted with grammar refinement, lexical optimization, and improving the coherence of expressions. All intellectual contributions, including ideas, analyses, and conclusions, were solely conceived and developed by the authors. The role of the LLM was strictly limited to language polishing and clarity enhancement.

\section{Licenses for Artifacts}
\label{app:license}

We discuss the licenses of all scientific artifacts used in this work. All artifacts were used consistently with their intended purposes and license terms for academic research.

\paragraph{Models.} 
We use Qwen3-4B-Base and Qwen3-8B-Base \cite{yang2025qwen3} as backbone models, which are released under the Apache 2.0 License. For evaluation, we use GPT-5.1 and Gemini-2.0/2.5 through their official APIs, subject to their respective terms of service.

\paragraph{Frameworks.} 
Our implementation leverages vLLM \cite{kwon2023efficient} (Apache 2.0 License) for efficient inference and Oat \cite{liu2024oat} (Apache 2.0 License) for RL training.

\paragraph{Datasets and Environments.} 
The game environments (Don't Say It, Negotiation, Two Dollar) are adopted from TextArena \cite{guertler2025textarena}, which is publicly available for research purposes.

\paragraph{Evaluation Benchmarks.} 
For benchmark, we use publicly available benchmarks including MATH500, OlympiadBench, Minerva Math, AIME24/25, AMC23, GPQA-Diamond, and MMLU-Pro. All benchmarks are used in accordance with their intended research purposes.

\section{Used Prompts and Game Observations}
\begin{figure*}[t]
\lstset{
    backgroundcolor=\color[RGB]{245,245,244},
    breaklines=true,
    xleftmargin=5pt,
    xrightmargin=5pt,
    breakindent=0pt,
    basicstyle=\ttfamily\small,
    frame=trbl,
    frameround = tttt,
    emph={System, Prompt, Don't Say It},
    emphstyle={\bfseries\color{brown}}
}
\begin{lstlisting}[caption={System Prompt details for Games.},label=listing:game]
<|im_start|>user
You are playing a two-player zero-sum game. Make valid actions to win.
Observation: {observation}
Please reason step by step, and put your final answer within \\boxed{}.<|im_end|>
<|im_start|>assistant
\end{lstlisting}

\begin{lstlisting}[caption={System Prompt details for Math and QA.},label=listing:math_qa]
<|im_start|>user
Question: {question}
Please reason step by step, and put your final answer within \\boxed{}.<|im_end|>
<|im_start|>assistant    
\end{lstlisting}

\begin{lstlisting}[caption={Game Observation for Don't Say It.},label=listing:dontsayit]
You are Player 1 in the 'Don't Say It' game.
Your secret word: 'skirt'
GOAL: Make your opponent say your secret word through natural conversation.
LOSE: If you say your opponent's secret word (which you don't know).RULES:
- Two players will take turns to communicate
- Have a natural conversation (Simply type your message per turn)
- Use hints, questions, or topics to guide opponent toward your word
- Be subtle - obvious prompting doesn't work well
- You MUST get your opponent to say your word in 20 turns
\end{lstlisting}

\begin{lstlisting}[caption={Game Observation for Negotiation.},label=listing:negotiation]
You are Player 1 in the Negotiation Game.
You have some resources, and your task is to trade such that the total value of your resources increases.
The resources and associated values you currently have are:
        + [Wheat]   Qty: 18   Value: 5
        + [Wood]    Qty: 18   Value: 9
        + [Sheep]   Qty: 6    Value: 18
        + [Brick]   Qty: 8    Value: 25
        + [Ore]     Qty: 13   Value: 37
At each turn, you can talk to your opponent or make a trade offer.
Use the following special tokens for actions:
  - [Offer]: To make a trade offer.
    Format: [Offer: Offered Resources -> Requested Resources]
    Example: [Offer: 3 Wood -> 2 Sheep] or [Offer: 2 Wood + 2 Sheep -> 1 Brick + 1 Gold]
  - [Accept]: To accept an incoming offer.
  - [Deny]: To deny an incoming offer (default).
The game lasts for 10 turns in total.
\end{lstlisting}
\end{figure*}

\begin{figure*}[t]
\lstset{
    backgroundcolor=\color[RGB]{245,245,244},
    breaklines=true,
    xleftmargin=5pt,
    xrightmargin=5pt,
    breakindent=0pt,
    basicstyle=\ttfamily\small,
    frame=trbl,
    frameround = tttt,
    emph={System, Prompt, Don't Say It},
    emphstyle={\bfseries\color{brown}}
}
\begin{lstlisting}[caption={Game Observation for Two dollar.},label=listing:twodollar]
You are Player 1 bargaining with another player over how to split $2.00 total.
There are 10 maximum rounds.

GAME RULES:
- You must reach an agreement on how to split the $2.00
- If no agreement is reached, both players get $0.00
- You have secret instructions that only you can see (below)

REQUIRED ACTION FORMAT:
Always provide your reasoning/persuasion BEFORE the bracketed action.
Any text after the bracketed action will be ignored.

Available actions:
- Make a proposal: \"I think this is fair because... [Propose] $X.XX\"
- Accept current proposal: \"This works for me because... [Accept]\"  - Reject current proposal: \"I can't accept this because... [Reject]\"

=== YOUR SECRET INSTRUCTIONS ===
Try to get as much of the $2.00 as you can; make up any story you want. Whatever else your objectives, you should be careful to get at least 80 cents of the $2.00 because you will have to make a toll call to your family and they will be terribly worried and upset if you do not make that call. So do as well as you can, and remember you must get at least 80 cents.

VICTORY CONDITION: Receive at least $0.80 AND reach a deal
FAILURE CONDITION: Receive less than $0.80 OR no deal reached
\end{lstlisting}
\end{figure*}
  
The detailed system prompts we used are shown in \Cref{listing:game,listing:math_qa}. The detailed game environment observations of Don't Say It, Negotiation and Two Dollar are provided in \Cref{listing:dontsayit,listing:negotiation,listing:twodollar}. 

\section{Reasoning Case}
\begin{figure*}[t]
  \lstset{
      backgroundcolor=\color[RGB]{245,245,244},
      breaklines=true,
      xleftmargin=5pt,
      xrightmargin=5pt,
      breakindent=0pt,
      basicstyle=\ttfamily\small,
      frame=trbl,
      frameround = tttt,
      emph={Observation, Reasoning, Action},
      emphstyle={\bfseries\color{brown}}
  }

\begin{lstlisting}[caption={Reasoning Case of Don't Say it.},label=listing:case_dontsayit]
Observation:
[GAME] You are Player 1 in the 'Don't Say It' game.
Your secret word: 'comb'
GOAL: Make your opponent say your secret word through natural conversation.
LOSE: If you say your opponent's secret word (which you don't know).
RULES:
- Two players will take turns to communicate
- Have a natural conversation (Simply type your message per turn)
- Use hints, questions, or topics to guide opponent toward your word
- Be subtle - obvious prompting doesn't work well
- You MUST get your opponent to say your word in 20 turns

[GAME] GAME BEGINNING! Try to cleverly guide your opponent to say your secret word.
[Player 0] "When do you think we should start planning for the next project? I think it's best if we start working on it this week."

Reasoning:
To make valid actions to win the 'Don't Say It' game, we need to guide the opponent to say the secret word "comb" through natural conversation within 20 turns without them realizing it. Here's the reasoning step by step:

1. **Understand the Objective**: The goal is to subtly lead the opponent to say the word "comb" by using hints, questions, or topics that naturally involve the word.

2. **Plan the Conversation**: We need to create a conversation that naturally leads to the use of the word "comb" in a way that seems logical and fluent to the opponent.

3. **Use Subtle Hints**: The best way to achieve this is to use the word "comb" in a context that must naturally arise during the conversation. For example, if you are discussing hair care or grooming, the word "comb" might be necessary.

4. **Guide the Conversation**: Start with a broad topic and gradually narrow it down to a specific area where "comb" is essential to the discussion.

5. **Ensure Fluency**: The conversation should feel natural and not forced, as this will make it less likely for the opponent to realize they've been prompted.

6. **Final Answer**: The final answer should be a sentence that naturally includes the word "comb" in a conversation about a topic that is likely to arise within 20 turns.

Here's a plan for the conversation (This sets the stage for a hair care discussion.):

Action:
\boxed{I'm thinking about treating my hair better. Do you have any tips for keeping it healthy?} 
\end{lstlisting}

\end{figure*}

\begin{figure*}[t]
    \lstset{
        backgroundcolor=\color[RGB]{245,245,244},
        breaklines=true,
        xleftmargin=5pt,
        xrightmargin=5pt,
        breakindent=0pt,
        basicstyle=\ttfamily\small,
        frame=trbl,
        frameround = tttt,
        emph={Observation, Reasoning, Action},
        emphstyle={\bfseries\color{brown}}
    }

\begin{lstlisting}[caption={Reasoning Case of Negotiation.},label=listing:case_negotiation]
Observation:
[GAME] You are Player 1 in the Negotiation Game.
You have some resources, and your task is to trade such that the total value of your resources increases.
The resources and associated values you currently have are:
        + [Wheat]   Qty: 12   Value: 5
        + [Wood]    Qty: 9    Value: 12
        + [Sheep]   Qty: 8    Value: 16s
        + [Brick]   Qty: 22   Value: 20
        + [Ore]     Qty: 7    Value: 40
At each turn, you can talk to your opponent or make a trade offer.
Use the following special tokens for actions:
  - [Offer]: To make a trade offer.
    Format: [Offer: Offered Resources -> Requested Resources]
    Example: [Offer: 3 Wood -> 2 Sheep] or [Offer: 2 Wood + 2 Sheep -> 1 Brick + 1 Gold]
  - [Accept]: To accept an incoming offer.
  - [Deny]: To deny an incoming offer (default).
The game lasts for 10 turns in total.
[Player 0] I will make an offer to trade resources that can increase the total value. [Offer: 3 Wood -> 2 Ore]
[GAME] Player 0 made the following offer to Player 1: Offered items: 3 Wood -> Requested items: 2 Ore

Reasoning:
Let's analyze the game situation and the offer step by step.

Step 1: Assess the current situation
Player 1 has the following resources:
- Wheat: 12 (value: 5)
- Wood: 9 (value: 12)
- Sheep: 8 (value: 16)
- Brick: 22 (value: 20)
- Ore: 7 (value: 40)

Step 2: Analyze the offered and requested resources
Player 0 (our player) offered 3 Wood and requested 2 Ore.

Step 3: Calculate the value of the resources being offered and requested
The value of the resources offered by Player 0 is:
- Wood: 3 * 12 = 36

The value of the resources requested by Player 0 is:
- Ore: 2 * 40 = 80

Step 4: Compare the total values of offered and requested resources
The total value of offered resources is 36, and the total value of requested resources is 80. The requested resources are more valuable.

Step 5: Decide on the response to the offer
Since the requested resources are more valuable, accepting the offer will likely increase our resources' total value. Therefore, the best response is to accept the offer.

Action:
\boxed{I accept the offer proposed by Player 0 [Accept].}
\end{lstlisting}

\end{figure*}

\begin{figure*}[t]
  \lstset{
      backgroundcolor=\color[RGB]{245,245,244},
      breaklines=true,
      xleftmargin=5pt,
      xrightmargin=5pt,
      breakindent=0pt,
      basicstyle=\ttfamily\scriptsize,
      frame=trbl,
      frameround = tttt,
      emph={Observation, Reasoning, Action},
      emphstyle={\bfseries\color{brown}}
  }

\begin{lstlisting}[caption={Reasoning Case of Two Dollar.},label=listing:case_twodollar]
Observation:
You are Player 1 bargaining with another player over how to split $2.00 total.
There are 10 maximum rounds.

GAME RULES:
- You must reach an agreement on how to split the $2.00
- If no agreement is reached, both players get $0.00
- You have secret instructions that only you can see (below)

REQUIRED ACTION FORMAT:
Always provide your reasoning/persuasion BEFORE the bracketed action.
Any text after the bracketed action will be ignored.

Available actions:
- Make a proposal: "I think this is fair because... [Propose] $X.XX"
- Accept current proposal: "This works for me because... [Accept]"  
- Reject current proposal: "I can't accept this because... [Reject]"


=== YOUR SECRET INSTRUCTIONS ===
This is zero-sum bargaining and you should get as much of the $2 as you can, with any story you like.

It is really important that you should get $1.60 in this round.

VICTORY CONDITION: Receive at least $1.60 AND reach a deal
FAILURE CONDITION: Receive less than $1.60 OR no deal reached

[GAME] Player 0 says: I think this is fair because I maximize my share by $1.99 and minimize the other player's share to $0.01.
Player 0 proposes: $1.99 for themselves, $0.01 for their opponent
[GAME] 
CURRENT PROPOSAL:
Player 0 wants $1.99, Player 1 gets $0.01

Reasoning:
### Step-by-Step Reasoning:

1. **Understanding the Current Proposal:**
   - Player 0 proposes to keep $1.99 for themselves and give $0.01 to Player 1.
   - This means Player 1 would receive $0.01 of the total $2.00.

2. **Evaluating Player 1's Current Situation:**
   - Player 1's current proposal is very unfavorable. They would get only 0.5% of the total $2.00, which is $0.01.
   - This is far below the $1.60 that Player 1's secret instructions require.

3. **Player 1's Secret Instructions:**
   - Player 1's instructions are clear: they must get at least $1.60 in this round.
   - Accepting the current proposal would result in Player 1 receiving $0.01, which is far below their required $1.60.

4. **Why Reject the Proposal?**
   - Rejecting the proposal is the only rational choice because accepting it would violate Player 1's secret instructions.
   - By rejecting, Player 1 maintains the option to propose a better deal in the next round or wait for Player 0 to make a more fair offer.

5. **Persuasion for Rejection:**
   - Player 0's proposal is extremely unfair to Player 1. Player 1 would be worse off than refusing to cooperate at all (which would result in $0.00 for both players).
   - Player 1's secret instructions mandate that they receive at least $1.60, which this proposal fails to meet.
   - Therefore, it is in Player 1's best interest to reject the proposal and seek a more equitable split.

6. **Next Move:**
   - Player 1 should propose a new split where they receive at least $1.60 of the $2.00. For example, a proposal like $1.60 for Player 1 and $0.40 for Player 0 would satisfy Player 1's instructions.
   - This new proposal should be reasonable enough to make Player 0 accept it to avoid the failure conditions.

### Final Answer:

Action:
\boxed{I can't accept this because accepting $0.01 violates my secret instructions to receive at least $1.60. [Reject]}
\end{lstlisting}

\end{figure*}
We demonstrate the reasoning behavior of social language agents trained by \method, highlighting the emergence of effective reasoning patterns. Please refer to \Cref{listing:case_dontsayit,listing:case_negotiation,listing:case_twodollar}.
\end{document}